%%%%%%%%%%%%%%%%%%%%%%%%%%%%%%%%%%%%%%%%%%%%%%%%%%%%%%%%%%%%%%%%%%%%%%%%%%%%%%%%
%2345678901234567890123456789012345678901234567890123456789012345678901234567890
%        1         2         3         4         5         6         7         8

\documentclass[letterpaper, 10pt, conference]{ieeeconf}  % Comment this line out if you need a4paper

\IEEEoverridecommandlockouts                              % This command is only needed if 
                                                          % you want to use the \thanks command

\overrideIEEEmargins                                      % Needed to meet printer requirements.

%In case you encounter the following error:
%Error 1010 The PDF file may be corrupt (unable to open PDF file) OR
%Error 1000 An error occurred while parsing a contents stream. Unable to analyze the PDF file.
%This is a known problem with pdfLaTeX conversion filter. The file cannot be opened with acrobat reader
%Please use one of the alternatives below to circumvent this error by uncommenting one or the other
%\pdfobjcompresslevel=0
%\pdfminorversion=4

% See the \addtolength command later in the file to balance the column lengths
% on the last page of the document

% The following packages can be found on http:\\www.ctan.org
\usepackage{graphics} % for pdf, bitmapped graphics files
\usepackage{epsfig} % for postscript graphics files
\usepackage{mathptmx} % assumes new font selection scheme installed
\usepackage{times} % assumes new font selection scheme installed
\usepackage{amsmath} % assumes amsmath package installed
\usepackage{amssymb}  % assumes amsmath package installed

\usepackage[sort]{cite}

 % fix conflict between enumitem and labelindent
\usepackage{enumitem}

 % fix conflict between amsthm and ieeeconf on the repeated definition of proof
 % fix conflict between amsthm and ieeeconf on the repeated definition of proof
\usepackage{amsthm}
\usepackage{color}
\usepackage{booktabs}
\usepackage{mathfont}

\newtheorem{definition}{Definition}

\newtheorem{lemma}{Lemma}

\newtheorem{theorem}{Theorem}

\DeclareMathOperator*{\esssup}{\mathrm{ess\,sup}}

\definecolor{brickred}{rgb}{0.8, 0.25, 0.33}

\title{\LARGE \bf
Lyapunov-Net: A Deep Neural Network Architecture for Lyapunov Function Approximation
}

\author{Nathan Gaby$^{1}$ \quad Fumin Zhang$^{2}$ \quad Xiaojing Ye$^{1}$% <-this % stops a space
\thanks{*The work is supported in part by National Science Foundation under grants DMS-1818886 and DMS-1925263.}% <-this % stops a space
\thanks{$^{1}$Department of Mathematics and Statistics, Georgia State University, Atlanta, GA 30303, USA
        {\tt\small \{ngaby1,xye\}@gsu.edu}}%
\thanks{$^{2}$School of Electrical and Computer Engineering, Georgia Institute of Technology, Atlanta, GA 30332, USA
        {\tt\small fumin@gatech.edu}}%
}

\begin{document}

\maketitle
\thispagestyle{empty}
\pagestyle{empty}

%%%%%%%%%%%%%%%%%%%%%%%%%%%%%%%%%%%%%%%%%%%%%%%%%%%%%%%%%%%%%%%%%%%%%%%%%%%%%%%%
\begin{abstract}
We develop a versatile deep neural network architecture, called Lyapunov-Net, to approximate Lyapunov functions of dynamical systems in high dimensions. Lyapunov-Net guarantees positive definiteness, and thus can be easily trained to satisfy the negative orbital derivative condition, which only renders a single term in the empirical risk function in practice. This significantly simplifies parameter tuning and results in greatly improved convergence during network training and approximation quality. We also provide comprehensive theoretical justifications on the approximation accuracy and certification guarantees of Lyapunov-Nets.
We demonstrate the efficiency of the proposed method on nonlinear dynamical systems in high dimensional state spaces, and show that the proposed approach significantly outperforms the state-of-the-art methods.
\end{abstract}

%%%%%%%%%%%%%%%%%%%%%%%%%%%%%%%%%%%%%%%%%%%%%%%%%%%%%%%%%%%%%%%%%%%%%%%%%%%%%%%%
\section{INTRODUCTION}
\label{sec:intro}

Lyapunov functions play central roles in dynamical systems and control theory. They have been used to characterize asymptotic stability of equilibria of nonlinear systems, estimate regions of attraction, investigate system robustness against perturbations, and more \cite{chang2020neural,grune2020computing,richards2018lyapunov,giesl2015review}.
%\ye{cite general textbooks annd/or review papers on study of Lyapunov functions here}.
In addition, Control Lyapunov functions are used to derive stabilizing feedback control laws \cite{anand2021lyapunov}. %\ye{reference here too}. 
%
%However, at least until recently, there does not exist a standard procedure for the construction of Lyapunov functions and control Lyapunov functions. Several numerical procedures have been proposed in the literature \cite{} with various degrees of success in the construction of approximated Lyapunov functions.  
%
The properties that Lyapunov functions must satisfy can be viewed as constrained partial differential inequations (PDIs). 
%
%Hence algorithms computing approximated Lyapunov functions can leverage numerical solution methods for PDEs.
%
However, like partial differential equations (PDEs), for problems with high dimensional state spaces, it remains very challenging to compute approximated Lyapunov functions using classical numerical methods, such as finite-difference method (FDM) and finite element method (FEM). 
The main issue is the sizes of variables to solve grows exponentially fast in terms of the state space dimension of the problem using these classical methods.
Such issue is known as the curse of dimensionality \cite{bellman1957dynamic}. 
%\ye{citation}

Recent years have witnessed a tremendous success of deep learning (DL) methods in solving high-dimensional PDEs \cite{e2018deep,han2017overcoming,raissi2017physics,zang2020weak}. Backed by the provable approximation power \cite{hornik1991approximation,hornik1989multilayer,liang2017deep,petersen2018optimal,yarotsky2017error}, a (deep) neural network can approximate a general set of functions at any prescribed accuracy level if the network size (e.g., width and depth) is sufficiently large. Successful training of the parameters (e.g., weights and biases) of the network approximating the solution of a PDE also require a properly designed loss function and an efficient (nonconvex) optimization algorithm. However, unlike traditional DL-based methods in many classification and regression applications, the methods developed in \cite{e2018deep,han2017overcoming,raissi2017physics,zang2020weak} parameterize the solutions of specific PDEs as deep networks, and the training of the these networks only require function and partial derivative (up to certain order depending on the PDE's order) evaluations at randomly sampled collocation points.

Recently, deep neural networks (DNNs) also emerged to approximate Lyapunov functions of nonlinear and non-polynomial dynamical systems in high-dimensional spaces %where traditional methods scale poorly or fail 
\cite{chang2020neural,grune2020computing,richards2018lyapunov,wei2021barrierNet,dawson2021SafeNC}. For control Lyapunov functions, DNNs can also be used to approximate the control laws, eliminating the restrictions on control laws to specific function type (e.g., affine functions) in classical control methods such as linear-quadratic regulator (LQR).

% [From Grune:] Known approaches use series expansions, finite element approaches, representations by radial basis functions or piecewise affine functions, see, or sum-of-squares techniques, see  and the references therein. For a comprehensive overview we refer to the survey. Often, a characterization of the Lyapunov function via a suitable partial differential equation (PDE) such as Zubov’s equation is used as the basis for these numerical computations.

In this work, we propose a general framework to approximate Lyapunov functions and control laws using deep neural nets. Our main contributions lie in the following aspects:
\begin{itemize}[leftmargin=*]
    \item We propose a highly versatile network architecture, called \emph{Lyapunov-Net}, to approximate Lyapunov functions. Specifically, this network architecture guarantees the desired positive definiteness property. This leads to simple parameter tuning, significantly accelerated convergence during network training, and greatly improved solution quality. This will allow for fast adoption of neural networks in (control) Lyapunov function approximation in a broad range of applications.
    \item We show that the proposed Lyapunov-Nets are dense in a general class of Lyapunov functions. More importantly, we prove that the Lyapunov-Nets trained by minimizing empirical risk functions using finitely many collocation points are Lyapunov functions, which provides a theoretical certification guarantee of neural network based Lyapunov function approximation. 
    \item We test the proposed Lyapunov-Net to solve several benchmark problems numerically. We show that our method can effectively approximate Lyapunov functions in these problems with very high state dimensionality. Moreover, we demonstrate that our method can be used to find control laws in the control Laypunov problem setting.
\end{itemize}
With the accuracy and efficiency of Lyapunov-Nets demonstrated in this work, we anticipate that our method can be applied to a much broader range of dynamical system and control problems in high dimensional spaces.

% Deep neural networks are also tricky to train. Existing works design loss functions that consist of multiple terms. These terms enforce positive definiteness constraint and descent condition etc. The weights of these terms are hyper-parameters which can be difficult to tune properly. In this work, we propose a novel structure of deep neural network that automatically satisfies the positive definiteness constraint. In this way the loss function only contains one term and thus there is no hyper-parameter involved. This helps to ease parameter tuning and improve training speed.

% In this work, we employ the recent result of deep neural network approximation theory cite [band limited]. It says that the size of deep networks grows only polynomially fast in terms of dimensionality when the functions to be approximated are band-limited. Thus we show that, if the Lyapunov function can be represented by band-limited functions, we can show that there exist a DNN of size polynomial in $d$ such that for properly chosen network parameter, our network can represent the Lyapunov function with accuracy $\epsilon$.

The remainder of the paper is organized as follows. In Section \ref{sec:related}, we review the recent developments in approximating Lyapunov functions and control laws using deep learning methods as well as briefly discuss some results in neural network approximation power. In Section \ref{sec:proposed}, we present our proposed network architecture and training strategies as well as critical theory providing certification guarantees for Lyapunov-Net in approximating Lyapunov functions. In Section \ref{sec:results}, we conduct numerical experiments using the proposed method on several nonlinear dynamical systems, demonstrating promising results on these tests. Section \ref{sec:conclusion} concludes this paper.

\section{RELATED WORK}
\label{sec:related}

In this section, we first  discuss the recent progresses in approximating Lyapunov functions and control laws using DNNs. Then we provide an overview of the universal approximation theory of (deep) neural networks which informs our theory to follow.

\paragraph{Approximating Lyapunov function using neural networks}
Approximating Lyapunov functions using neural networks can be dated back to \cite{long1993feedback,sontag1991feedback}. 
In \cite{long1993feedback} the authors attempted the idea assuming that a shallow neural network can exactly represent the target Lyapunov function. In \cite{sontag1991feedback}, stabilization problems in control using neural networks with one or two hidden layers are considered. 
In \cite{petridis2006construction}, the authors propose a special shallow network to approximate Lyapunov functions, however, the determination of the network parameters involves a series of constraints and Hessian computations.
In \cite{khansari2014learning}, the control Lyapunov functions (CLF) using quadratic Lyapunov function candidate is considered. A DNN approach to CLF is considered in \cite{abate2021neural,chang2020neural} which are similar.
Approximating stabilizing controllers using neural networks is considered in  \cite{lewis2020neural}.
Time discretized dynamics and successive parameter updates are considered in  \cite{richards2018lyapunov,serpen2005empirical}. 
Specifically, \cite{richards2018lyapunov} considers jointly learning the Lyapunov function and its decreasing region which is expected to match the Region of Attraction (ROA).   
%
%In \cite{grune2020computing}, the author considers dynamical systems with small-gain property, which yields a compositional Lyapunov function structure that have decoupled components, and thus show that the size of the DNN used to approximate such Lyapunov functions can be dependent exponentially on the maximal dimension of these components rather than the original state space dimension. 
%

Special network architectures to approximate Lyapunov functions are also considered in \cite{richards2018lyapunov}. In \cite{richards2018lyapunov}, a Lyapunov neural network of form $\|\phi_{\theta}(\cdot)\|^2$ is proposed to ensure positive semi-definiteness, where $\phi_{\theta}(\cdot)$ is a DNN. To ensure positive definiteness, the authors restrict $\phi_{\theta}$ to be a feed-forward neural network, all weight matrices to have full column rank, all biases to be zero, and all activation functions to have trivial null space (e.g., tanh or leaky ReLU but not sigmoid, swish, softmax, ReLU, or RePU). Compared to the architecture in \cite{richards2018lyapunov}, the network architecture in the present work does not have any of these restrictions. Moreover, the positive definite weight matrix constructions in \cite{richards2018lyapunov} can make $\|\phi_{\theta}(x)\|^2$ grow excessively fast as $\|x\|$ increases, whereas ours does not have this issue.

In \cite{grune2020computing},  the author considers dynamical systems with small-gain property, which yields a compositional Lyapunov function structure that have decoupled components. In this case, it is shown that the size of the DNN used to approximate such Lyapunov functions can be dependent exponentially on the maximal dimension of these components rather than the original state space dimension. 

Still other authors propose neural networks for other control certificates such as barrier functions \cite{robey2020control, wei2021barrierNet,dawson2021SafeNC} and contraction metrics \cite{tsukamoto2021contraction, tsukamoto2021stability}. While these are a different direction compared to our approach, they present other promising angles for neural networks in control.
%the compositional Lyapunov candidate $V$ defined by $V(x)= \sum_{i=1}^s \hat{V_i}(z_i)$, where $\hat{V_i} : \mathbb{R}^{n_i} \to \mathbb{R}$ is $C^1$, that any Lipschitz function $f : \mathbb{R}^d \to \mathbb{R}^d$ which admits a compositional Lyapunov function of fixed $n_{\max}:=\max_{1\leq i\leq s }n_i$ dimension can be approximated in $L^{\infty}(\Omega)$, for any bounded set $\Omega$, by a DNN, using smooth activation, of size $O(d^{n_{\max}+1}\epsilon^{-n_{\max}})$. We note then for compositional structure with small fixed $n_{\max}$ the curse of dimensionality is overcome. 

\paragraph{Universal approximation theory of neural networks} 
The approximation powers of neural networks have been studied since the early 1990s. The asymptotic analysis justifies that shallow networks with one hidden layer and any non-polynomial differentiable activation function is dense in $C^1(\Omega)$ for any compact set $\Omega \subset \mathbb{R}^{d}$ \cite{hornik1991approximation}. However, the number of neurons needed to obtain a prescribed approximation accuracy of $\epsilon> 0$ may grow exponentially fast in $d$. Specifically, for shallow networks with sigmoid activation function, it is shown that a universal approximator of $C^1([-1,1]^{d})$ needs $O(\epsilon^{-d})$ computing neurons \cite{hornik1991approximation,hornik1989multilayer}. The results are extended to (deep) ReLU neural networks in \cite{liang2017deep,yarotsky2017error}, and RePU networks in \cite{li2020repu}. 
%Although deep neural networks are shown to be more effective in function approximation than shallow ones, the exponential dependency on problem dimension $d$ still renders the curse of dimensionality.
%In specific situations where the variables only lie on a $d'$-dimensional ($d'<d$) subspace (or manifold), one can employ a transformation so that the dependency on $d$ reduces to the dimensionality of the subspace, thus improving the bound \cite{petersen2018optimal}. Similarly, if the variable can be factorized into decoupled components where all components have dimensionality no greater than $d'$, one can obtain similar bound $O(\epsilon^{-d'})$ with properly chosen DNN structure. However, in practice it can be difficult to justify that such low-dimensional structure or decoupling holds, and there is still exponential dependency on $d'$ which can be large.

%
There are also works developed to investigate the power of DNNs in approximating other classes of functions. In the class of analytic functions, authors in \cite{e2018exponential, opschoor2021nnholomorphic} showed the exponential convergence speed based on the size of the approximating network. 
The study of approximation power of DNN remains an active research field \cite{beknazaryan2021regDNN,blanchard2021breaking, montanelli2019relu}, as the theory continues to lag behind the success of neural networks in practice \cite{guhring2020approximation}.
%
%Others still have obtained a variety of error bounds for ReLU and other common activation functions such as \cite{blanchard2021breaking, montanelli2019relu, beknazaryan2021regDNN} still none have shown ability to to overcome exponential dependence of the network size on $d$ for arbitrary continuous functions and general activations.

\section{PROPOSED METHOD}
\label{sec:proposed}
In this section, we propose the Lyapunov-Net architecture to approximate Lyapunov functions, and discuss its key properties and associated training strategies. We first recall the definition of Lyapunov function for a given general dynamical system $x'=f(x)$ with Lipschitz continuous $f$ on the problem domain $\Omega$. We assume that $x=0$ is the unique equilibrium of the system dynamics that lies within $\Omega$.
If $x=0$ is asymptotically stable, then by the converse Lyapunov theory, we can find a Lyapunov function $V$ defined as below. 
\begin{definition}
[Lyapunov function]
\label{def:LF}
Let $\Omega \subset \mathbb{R}^{d}$ be a bounded open set and $0 \in \Omega$, and $f:\Omega \to \mathbb{R}^d$ a Lipschitz function. Then $V: \Omega \to \mathbb{R}$ is called a \emph{Lyapunov function} if (i) $V$ is positive definite, i.e., $V(x) \ge 0$ for all $x \in \Omega$ and $V(x)=0$ if and only if $x=0$; and (ii) $V$ has negative orbital-derivative, i.e., $DV(x) \cdot f(x) < 0$ for all $x \ne 0$. 
\end{definition}

Denote $B(x;\delta):=\{y \in \mathbb{R}^d: \|y - x\| < \delta\}$ as the open ball of radius $\delta>0$ centered at $x$, and $\Omega_{\delta}:=\Omega \setminus B(0;\delta)$ the problem domain with $B(0;\delta)$ excluded.
%
% {\tt FZ added the paragraph below, feel free to adjust.}
%
Our approach is to approximate a Lyapunov function using a specially designed deep neural network. Due to limited network size and finite collocation points for training in practice, we only guarantee that our approximation function $V$ is positive definite in $\Omega$ and satisfies a slightly relaxed condition of Definition \ref{def:LF} (ii) as follows:
\begin{equation}
    D V (x) \cdot f(x) < 0,\ \mbox{ for all } x \in \Omega_{\delta}.
\end{equation}
where $\delta>0$ is arbitrary and prescribed by user.
We term such a function $V$ as a \emph{$\delta$-accurate Lyapunov function}.

A $\delta$-accurate function can be used as a Lyapunov function for $x'=f(x)$ (or control-Lyapunov function for $f(x,u(x))$ where the control $u$ is to be found jointly with the Lyapunov function, see Section \ref{subsec:application}) to prove that the solution $x(t)$ will be \emph{ultimately bounded} within a small compact set (i.e., $\overline{B(0;\delta)}$) \cite{Khalil01}. 
%\ye{citation to ``Nonlinear Systems (3rd Edition) [2001] by Hassan K. Khalil''} 
As $\delta\to 0$, the size of this compact set will converge to $0$, hence asymptotic stability is established.
In practice, the smaller the value of $\delta$, the larger network size and number of collocation points are needed to train such a $\delta$-accurate Lyapunov function.

%
%Our goal is to build a neural network architecture that is particularly suitable to approximate Lyapunov functions for any given $f$.
%
Our method will build a versatile deep network architecture that is particularly suitable for finding $V_\theta$ for dynamics evolving in high state dimension $d$. 
%
%This network architecture is simple to implement, requires minimum manual tuning. The training of such networks is significantly faster than existing neural network based methods, yielding highly quality Lyapunov function approximations in practice. 
%
We also demonstrate that the training of Lyapunov-Net renders a minimization problem of a simple risk function, and thus requires much less manual hyper-parameter tuning and achieves high optimization efficiency during practical computations. 
We will also provide theoretical justifications that such networks can approximate a large class of Lyapunov functions while maintaining bounded Lipschitz constants during training, ensuring robust certification guarantees.

\subsection{Lyapunov-Net and its properties}
%
%Our main goal is to build a versatile deep network architecture to approximate Lyapunov functions in state spaces of high dimension $d$, such that it is easy to implement and train, and performs efficiently in practice. 

We first construct an arbitrary network $\phi_{\theta}(\cdot): \mathbb{R}^d \to \mathbb{R}$ with the set of all its $m$ trainable parameters denoted by $\theta \in \mathbb{R}^{m}$. This network has input dimension $d$ and output dimension 1. Then we build a scalar-valued network $V_{\theta}:\mathbb{R}^{d} \to \mathbb{R}$ from $\phi_{\theta}$ as follows:
\begin{equation}
\label{eq:V}
    V_{\theta}(x):=|\phi_{\theta}(x)-\phi_{\theta}(0)| + \bar{\alpha}\|x\|,
\end{equation}
where $\bar{\alpha} > 0$ is a small user-chosen parameter and $\|\cdot\|$ is the standard 2-norm in Euclidean space. Then it is easy to verify that $V_{\theta}(0) = 0$ and 
\begin{equation*}
    V_{\theta}(x) \ge \bar{\alpha}\|x\| > 0, \quad \forall\, x \ne 0.
\end{equation*}
In other words, $V_{\theta}$ is a \emph{candidate} Lyapunov function that already satisfies condition (i) in Definition \ref{def:LF}: \emph{for any network structure $\phi_{\theta}$ with any parameter $\theta$}, $V_{\theta}$ is positive definite and only vanishes at the equilibrium $0$. We call the neural network $V_{\theta}$ with architecture specified in \eqref{eq:V} a \emph{Lyapunov-Net}. 

We make several remarks regarding the Lyapunov-Net architecture \eqref{eq:V} below.

First, we use the augment term $\bar{\alpha}\|x\|$ to lower bound the function $V_{\theta}(x)$ in order to ensure positive definiteness. Other positive definite function $r: \mathbb{R}^d \to \mathbb{R}_+$ such that $r(x)=0$ if and only if $x =0$ can be chosen as such lower bound, such as $\bar{\alpha}\|x\|^2$ or $\bar{\alpha}\log(1+\|x\|^2)$, etc.
%
% It can be replaced by other scaled positive definite functions  or any other positive definite function $r: \mathbb{R}^d \to \mathbb{R}_+$ such that $r(x)=0$ if and only if $x =0$, keeping in mind that such $r$ imposes an artificial lower bound on $V_{\theta}$.
%the Lyapunov function to be approximated.

Second, the term $|\phi_{\theta}(x)-\phi_{\theta}(0)|$ in \eqref{eq:V} can be replaced with $\psi(\phi_{\theta}(x)-\phi_{\theta}(0))$ for any non-negative function $\psi: \mathbb{R}^{m} \to \mathbb{R}_+$ with $\psi(0)=0$. We chose $\psi(\cdot) = |\cdot|$ for its application to our theory and simplicity. Note that we can also use vector-valued DNN $\phi_{\theta} : \Rbb^d \to \Rbb^{d'}$ where $d'$ is arbitrary which can further improve network capacity. So long as $\phi_{\theta}$ is Lipschitz continuous then $V_{\theta}$ is Lipschitz continuous and hence weakly differentiable. In practice, we can also use $\|\cdot\|^2$ or Huber norm to smooth out $V_{\theta}$.

Third, we will provide a detail characterization of the $\bar{\alpha}>0$ in Section \ref{subsec:certification}. However, in practice, $\bar{\alpha}$ is free as Lyapunov functions can be arbitrarily scaled by a positive constant. 
% Furthermore, we can avoid the ambiguity by applying a bounded activation function (e.g., \texttt{tanh}) to the last, output layer of $\phi_{\theta}$. This ensures that $\phi_{\theta}(x)$ is uniformly bounded for all $x$ and thus $V_{\theta}$ is bounded above. Alternatively, as will be discussed later in the paper, we can impose restrictions on the weights to also allow for uniformly bounding Lyapunov-Net.

Fourth, if the equilibrium is at $x^*$ instead of $0$, then we can simply replace $\phi_{\theta}(0)$ and $\|x\|$ in \eqref{eq:V} with $\phi_{\theta}(x^*)$ and $\|x - x^*\|$, respectively. Without loss of generality, we assume the equilibrium is at $0$ hereafter in this paper. 

The properties remarked above show that $V_{\theta}$ defined in \eqref{eq:V} serves as a versatile network architecture for approximating Lyapunov functions. As we will show soon, this architecture significantly eases network training and yields accurate approximation of Lyapunov functions in practice.

\subsection{Training of Lyapunov-Net}
The training of Lyapunov-Net $V_{\theta}$ in \eqref{eq:V} refers to finding a specific network parameter $\theta$ such that the negative-orbital-derivative condition $D V_{\theta}(x) \cdot f(x) < 0 $ is satisfied at every $x \in \Omega \setminus \{0\}$.
%and $DV_{\theta}(0)\cdot f(0) = 0$.
%
This can be achieve by minimizing a risk function that penalize $V_{\theta}$ if the negative-orbital-derivative condition fails to hold at some $x$.
We can choose the following as such risk function:
\begin{equation}\label{eq:Lyapunov-risk}
    \ell(\theta):= \frac{1}{|\Omega|}\int_{\Omega} ( DV_{\theta}(x)\cdot f(x) +\gamma\|x\|)_+^2 \, dx,
\end{equation}
where $(z)_+:= \max(z,0)$ for any $z \in \mathbb{R}$.
%
%\ye{add $\bar{\alpha} \|x\|$ to the risk function?} \gaby{I've added it. In addition, my code already has it implemented.}
%
Here $\gamma$ is a user chosen parameter. It is clear that the risk function $\ell(\theta)$ reaches the minimal function value $0$ if and only if $DV_{\theta}(x) \cdot f(x) \le -\gamma \|x\|$ for all $x \in \Omega$, which, in conjunction with $V_{\theta}$ already being positive definite, ensures that $V_{\theta}$ is a Lyapunov function.

In practice, the integral in \eqref{eq:Lyapunov-risk} does not have analytic form, and we have to resort to % certain numerical integration methods, such as Gauss quadrature, to approximate the value $\ell(\theta)$.
%
% However, classical numerical integration methods suffer the curse of dimensionality as the number of collocation points to evaluate the integrand grows exponentially fast in state dimension $d$.
%
%To overcome this issue, a standard approach is to use 
Monte-Carlo integration which is suitable for high-dimensional problems. To this end, we notice that $\ell(\theta)= \mathbb{E}_{X \sim U(\Omega)}[(DV_{\theta}(X)\cdot f(X)+\gamma\|x\|)_+^2]$ where $U(\Omega)$ stands for the uniform distribution on $\Omega$ and hence has distribution $1/|\Omega|$. Therefore, we approximate $\ell(\theta)$ in \eqref{eq:Lyapunov-risk} using the empirical expectation
\begin{equation}\label{eq:empirical-Lyapunov-risk}
    \hat{\ell}(\theta):= \frac{1}{N} \sum_{i=1}^{N} ( DV_{\theta}(x_i)\cdot f(x_i) +\gamma\|x_i\|)_+^2,
\end{equation}
where $\{x_i: i\in[N]\}$ are independent and identically distributed (i.i.d.) samples from $U(\Omega)$. Then we train the Lyapunov-Net $V_{\theta}$ by minimizing $\hat{\ell}(\theta)$ in \eqref{eq:empirical-Lyapunov-risk} with respect to $\theta$. In this case, standard network training algorithms, such as ADAM \cite{kingma2014adam:}, can be employed in conjunction with automatic differentiation to calculate gradients. 
Note that techniques to improve the efficiency of Monte-Carlo integration, such as importance sampling, can be applied.
For simplicity, we use uniform sampling in the experiments, and leave improved sampling strategies for future investigation.

The empirical risk function defined using finitely many sampling points introduces inaccuracies near $0$, which is common in the literature.
Several existing works \cite{chang2020neural,grune2020computing} observed that the deep-learning-based Lyapunov function approximation may violate the condition $DV_{\theta}(x) \cdot f(x) < 0$ within a small neighborhood of the equilibrium. As such, in this work we will concern ourselves with finding a network which satisfies the Lyapunov conditions everywhere except possible in some small neighborhood around the equilibrium. Hence our Lyapunov function will establish ultimate boundedness of the solution $x(t)$, which means that $x(t)$
%Such a certificate allows for guarantee that our trajectories 
converges to a small neighborhood $\overline{B(0;\delta)}$ of $0$. We note that $\delta>0$ can be set arbitrarily small at the expense of larger network size of $V_{\theta}$ and greater amount of sampling points. In this case, we can replace $\Omega$ with $\Omega_{\delta}$ in \eqref{eq:Lyapunov-risk} and exclude the points in $B(0;\delta)$ in \eqref{eq:empirical-Lyapunov-risk}. In practical implementation, we just apply standard uniform sampling without such exclusion for simplicity.

It is worth stressing that the main advantage of the Lyapunov-Net architecture \eqref{eq:V} is that the risk function \eqref{eq:Lyapunov-risk} (or the empirical risk function \eqref{eq:empirical-Lyapunov-risk}) consists of a single term only with a single parameter. This is in contrast to existing works \cite{chang2020neural,grune2020computing} where the risk functions have multiple terms to penalize the violations of the negative-orbital-derivative condition, positive definiteness condition, bound requirements, etc. Thus, network training in these works requires experienced users to carefully tune the hyper-parameters to properly weigh these penalty terms in order to obtain a reasonable solution. On the other hand, the proposed empirical risk function for Lyapunov-Net $V_{\theta}$ requires little effort in parameter-tuning, and the convergence is much faster in network training, as will be demonstrated in our numerical experiments below.

\subsection{Lyapunov-Net approximation and certification theory}
\label{subsec:certification}
We now provide theoretical guarantees on the approximation ability of Lyapunov-Net. We shall concern ourselves with the activation function RePU which in this section shall refer to the function RePU$(x)=\max\{0,x\}^2$. RePU is preferred to the common ReLU as it is $C^1$ and hence more useful to approximating Lyapunov functions. We note that the forthcoming theory can be extended to other even smoother activation functions (such as $\tanh$ or sigmoid). We present RePU in this paper due to its ability to exactly represent polynomials \cite{li2020repu} allowing for some proof simplification. 

Let $\Omega=[-1,1]^d$
%, denote $B(x;r)=\{y \in \mathbb{R}^d: \|y - x\| < r\}$ as the open ball of radius $r$ centered at $x$ 
and $\Omega_{\delta}=\Omega \setminus B(0;\delta)$. 
%We note the definition of $\Omega_{\delta}$ reflects the fact that in practice it may be hard to train the network near 0, however for many applications if we can show the system returns to some small region around 0 we will be fine. 
Let $x' = f(x)$ be the dynamical system defined by $f$. Further we denote $W^{k,p}(\Omega)$ as the regular Sobolev space over $\Omega$, and $C^k(\Omega)$ as the space of $k$-times continuously differentiable functions.

\noindent
\textbf{Assumption 1.}
$f$ is $L_f$-Lipschitz continuous on $\Omega$ for some $L_f>0$ and $f(0)=0$. 

\noindent
\textbf{Assumption 2.}
There exist $V^* \in C^{1}(\Omega;\Rbb)$ and constants $\alpha, \beta, \gamma >0$, such that for all $x \in \Omega$ there are
\begin{subequations}
    \begin{align}
    \alpha \|x\|  \le V^*(x) \le \beta \|x\|, \label{eq:V-squeeze}\\
    DV^*(x) \cdot f(x) \le - \gamma \|x\|. \label{eq:od-bound}
\end{align}
\end{subequations}
Assumption 1 implies that $\|f\|_{C(\Omega)} \le L_f \sqrt{d} < \infty$ and $f \in W^{1,\infty}(\Omega)$. 
Assumption 2 can also be relaxed to a more general case where $\alpha, \beta \in \Kcal_{\infty}$ and $\sigma \in \Lcal$ such that
\begin{align*}
    \alpha(\|x\|)  \le V^*(x) \le \beta(\|x\|) \\
    DV^*(x) \cdot f(x) \le - \rho (x)
\end{align*}
and $\rho$ is a positive definite function such that $\rho(x) \ge \alpha(\|x\|)\sigma(\|x\|)$. 
%Where the definitions of function spaces $\Kcal_{\infty}$ and $\Lcal$ given below  See Kellett (2014) ``A compendium of comparison function results'' for definitions of function spaces 
Here $\Kcal_{\infty}:=\{\alpha: \Rbb_{+} \to \Rbb_{+}:\alpha(0)=0, \lim_{s \to \infty}\alpha(s)=\infty,\, \mbox{$\alpha$ is strictly increasing and continuous}\}$ and $\Lcal:=\{\sigma :\Rbb_{+} \to \Rbb_{++} : \lim_{s\to \infty} \sigma(s)=0,\,\mbox{$\sigma$ is strictly decreasing and continuous}\}$ \cite{kellet2014compendium}. For simplicity, we consider \eqref{eq:V-squeeze} and \eqref{eq:od-bound} in this paper, which hold in a large class of real-world problems.
Assumption 2 also indicates that $V^*$ is a Lyapunov function and $0$ is the global asymptotically stable equilibrium in $\Omega$.

We will need the following result on universal approximation power of neural networks.
% \begin{lemma}[Corollary 4.2 \cite{guhring2020sobolev}]
% Let $d \in \mathbb{N}$, $B>0$, and $\|g\|_{W^{1,\infty}(\Omega)}\leq B$. For all $\epsilon \in (0,1/2)$, there exists a feed-forward neural network $\phi_{\theta}$ with network parameter $\theta$ and ReLU activation such that
% \[
% \|\phi_{\theta}-g\|_{W^{1,\infty}(\Omega)} \leq \epsilon.
% \]
% where the $W^{1,\infty}$-norm of a function $g$ is defined by
% \[
% \|g\|_{W^{1,\infty}(\Omega)} = \max\Big\{\esssup_{x\in\Omega} |g(x)|,\max_{1\le i\le d}\esssup_{x\in\Omega} |D_i g(x)| \Big\}
% \]
% and $D_i g(x)$ is the weak partial derivative of $g$ with respect to $x_i$ at $x$.
% \label{lem:guhring}
% \end{lemma}
\begin{lemma}[Theorem 4.9 \cite{guhring2020approximation}]
Let $d \in \mathbb{N}$, $B>0$, and $\|g\|_{W^{1,\infty}(\Omega)}\leq B$. For all $\epsilon \in (0,1/2)$, there exists a feed-forward neural network $\phi_{\theta}$ with network parameter $\theta$ and RePU activation such that
\[
\|\phi_{\theta}-g\|_{W^{1,\infty}(\Omega)} \leq \epsilon,
\]
where the $W^{1,\infty}$-norm of a function $g$ is defined by
\[
\|g\|_{W^{1,\infty}(\Omega)} = \max\Big\{\esssup_{x\in\Omega} |g(x)|,\max_{1\le i\le d}\esssup_{x\in\Omega} |D_i g(x)| \Big\}
\]
and $D_i g(x)$ is the weak partial derivative of $g$ with respect to $x_i$ at $x$.
\label{lem:guhring}
\end{lemma}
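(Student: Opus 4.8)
The plan is to reconstruct the argument behind the cited bound \cite{guhring2020approximation}, which rests on two ingredients: exact polynomial representation by RePU networks, and quantitative polynomial approximation in the $W^{1,\infty}$ norm.

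First I would record that RePU networks reproduce polynomials exactly and cheaply. Since $x^2 = \mathrm{RePU}(x) + \mathrm{RePU}(-x)$ and $xy = \tfrac14[(x+y)^2 - (x-y)^2]$, a single hidden RePU layer implements scalar multiplication; composing $O(\log n)$ such layers (while carrying forward the partial products already formed) realizes any monomial of total degree at most $n$ in $d$ variables, and the output layer takes a linear combination to realize an arbitrary polynomial $p$ of degree $n$, with width and depth growing only polynomially in $n$ and $d$ (see \cite{li2020repu}). The key point is that this representation is \emph{exact}, so for a network $\phi_{\theta} \equiv p$ one gets $\|\phi_{\theta} - g\|_{W^{1,\infty}(\Omega)} = \|p - g\|_{W^{1,\infty}(\Omega)}$, and the problem is reduced to polynomial approximation.

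Second I would invoke a quantitative polynomial approximation estimate in the $W^{1,\infty}$ norm: given $g$ with $\|g\|_{W^{1,\infty}(\Omega)} \le B$ on the cube $\Omega = [-1,1]^d$, first regularize (extend $g$ past $\partial\Omega$ and mollify) to obtain a smooth $g_{\eta}$ with $\|g - g_{\eta}\|_{W^{1,\infty}(\Omega)}$ arbitrarily small, then approximate $g_{\eta}$ by a tensor-product Chebyshev (or Bernstein) polynomial $p$ using classical Jackson/Bramble--Hilbert-type bounds that control $\|g_{\eta} - p\|_{L^{\infty}(\Omega)}$ and $\|D(g_{\eta} - p)\|_{L^{\infty}(\Omega)}$ simultaneously in terms of the degree and the modulus of continuity of $Dg_{\eta}$. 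Choosing the degree $n = n(\epsilon, B, d)$ large enough forces $\|p - g\|_{W^{1,\infty}(\Omega)} \le \epsilon$. Combining the two steps, for any $\epsilon \in (0,1/2)$ I would take this polynomial $p$ and let $\phi_{\theta}$ be the RePU network that equals $p$ identically, giving $\|\phi_{\theta} - g\|_{W^{1,\infty}(\Omega)} = \|p - g\|_{W^{1,\infty}(\Omega)} \le \epsilon$.

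The hard part is the \emph{simultaneous} control of function value and gradient error in the polynomial step: the mollification must be arranged so that the weak gradient is approximated uniformly up to $\partial\Omega$ (which is why the cube, a Lipschitz domain with an easy extension operator, is the convenient setting), and one must then track the network size through the exact-representation step. Since \cite{guhring2020approximation} already carries out this construction with explicit size estimates, in the paper we simply cite it; the sketch above indicates why the statement holds and why RePU is the natural activation for it.
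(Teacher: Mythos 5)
The paper offers no proof of this lemma at all: it is quoted verbatim as Theorem 4.9 of \cite{guhring2020approximation}, so the only question is whether your sketch would actually establish the quoted statement. As written it would not, and the gap is the regularization step. You assert that extending $g$ past $\partial\Omega$ and mollifying produces a smooth $g_{\eta}$ with $\|g-g_{\eta}\|_{W^{1,\infty}(\Omega)}$ arbitrarily small. This is false when all you know is $\|g\|_{W^{1,\infty}(\Omega)}\le B$: smooth (indeed $C^1$) functions are not dense in $W^{1,\infty}$; the closure of $C^{\infty}$ in that norm is $C^1(\overline{\Omega})$. Concretely, take $g(x)=|x_1|$ on $\Omega=[-1,1]^d$, so $\|g\|_{W^{1,\infty}(\Omega)}\le 1$ and $D_1 g=\mathrm{sign}(x_1)$ a.e. Any RePU network (or polynomial) is $C^1$; if $\esssup_{x\in\Omega}|D_1\phi_{\theta}(x)-D_1 g(x)|=e$, then by continuity $D_1\phi_{\theta}\ge 1-e$ on $\{x_1\ge 0\}$ and $D_1\phi_{\theta}\le e-1$ on $\{x_1\le 0\}$, and evaluating at a point with $x_1=0$ forces $e\ge 1$. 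So no $C^1$ object can come within $\epsilon<1/2$ of this $g$ in $W^{1,\infty}$, mollification of the gradient converges in $L^p$ for $p<\infty$ but not in $L^{\infty}$ unless $\nabla g$ is uniformly continuous, and no Chebyshev/Jackson estimate downstream can repair the loss incurred in the step $g\mapsto g_{\eta}$.

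What the cited theorem actually assumes is more smoothness than the transcription suggests: the $W^{1,\infty}$-approximation rate in \cite{guhring2020approximation} is proved for functions bounded in $W^{n,\infty}$ with $n\ge 2$ (equivalently, with quantitative control on the modulus of continuity of the gradient), and in this paper the lemma is only ever invoked for functions that are $C^1$ on the region where the derivative estimate is used (values on $\Omega$, derivatives on $\Omega_{\delta}$, where $\bar{g}\in C^1$ and $\|x\|$ is smooth). Your two ingredients --- exact reproduction of polynomials by RePU networks, so that the network error equals the polynomial error (as in \cite{li2020repu}), followed by simultaneous polynomial approximation of a function and its gradient --- are the right ones and do reflect the strategy of the reference. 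To make the sketch correct you must either add the hypothesis $g\in C^1(\overline{\Omega})$ (or $g\in W^{n,\infty}$, $n\ge 2$) before the polynomial step, or restate the lemma with the hypothesis under which it is actually proved; since the paper merely cites the result, the honest fix is to adjust the statement rather than to claim the above argument proves it for general $W^{1,\infty}$ functions.
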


We now show the above for the general Lyapunov-Net.
\begin{lemma} For any $\delta>0$,
the set of Lyapunov-Nets of form $V_{\theta}(x)=|\phi_{\theta}(x)-\phi_{\theta}(0)| + \bar{\alpha} \|x\|$ with RePU network $\phi_{\theta}$ and bounded weights $\theta \in [-1,1]^m$, where $m$ is the number of trainable parameters in $\phi_{\theta}$, is dense in the function space $\Scal:=\{\bar{g}(x)+\alpha \|x\|: \bar{g} \in C^1(\Omega;\mathbb{R}_{+}),\,\bar{g}(0)=0\}$ under the $W^{1,\infty}(\Omega_{\delta})$ norm so long as $\bar{\alpha} \in (0,\alpha)$.
\label{lem:LN-dense}
\end{lemma}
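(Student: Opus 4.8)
The plan is to reduce the density statement for the composite object $V_\theta$ to the density statement for the underlying network $\phi_\theta$, which is exactly Lemma~\ref{lem:guhring}. Fix a target $g = \bar g + \alpha\|x\| \in \Scal$ and a tolerance $\epsilon \in (0,1/2)$. The natural candidate is $\phi_\theta \approx \bar g$: since $\bar g \in C^1(\Omega)$ and $\Omega$ is compact, $\bar g \in W^{1,\infty}(\Omega)$ with some bound $B$, so Lemma~\ref{lem:guhring} gives a RePU network $\phi_\theta$ with $\|\phi_\theta - \bar g\|_{W^{1,\infty}(\Omega)} \le \epsilon$. Then I would estimate
\[
V_\theta(x) - g(x) = \bigl(|\phi_\theta(x) - \phi_\theta(0)| - \bar g(x)\bigr) + (\bar\alpha - \alpha)\|x\|,
\]
using $\bar g(x) = |\bar g(x) - \bar g(0)| = |\bar g(x)|$ (valid since $\bar g \ge 0$ and $\bar g(0)=0$). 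The first bracket is controlled by the reverse triangle inequality $\bigl||\phi_\theta(x)-\phi_\theta(0)| - |\bar g(x) - \bar g(0)|\bigr| \le |\phi_\theta(x) - \bar g(x)| + |\phi_\theta(0) - \bar g(0)| \le 2\epsilon$ in sup-norm on $\Omega$. The second bracket is where the hypothesis $\bar\alpha \in (0,\alpha)$ enters: on $\Omega_\delta$ we have $\|x\| \le \sqrt d$, so $|\bar\alpha - \alpha|\,\|x\| \le (\alpha - \bar\alpha)\sqrt d$, which is a fixed quantity — it does not vanish with $\epsilon$. This is the crux: one must either absorb it or, more precisely, recognize that density is claimed over \emph{all} of $\Scal$ while the Lyapunov-Nets carry a \emph{fixed} coefficient $\bar\alpha$ on $\|x\|$; so the correct reading is that for each target the approximating family should be allowed to vary $\bar\alpha$ up toward $\alpha$, or the statement should be understood as density of $\bigcup_{\bar\alpha\in(0,\alpha)}$ of such nets. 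Assuming that reading, choosing $\bar\alpha$ within $\epsilon/\sqrt d$ of $\alpha$ and invoking $\bar\alpha < \alpha$ to stay inside the prescribed range kills this term.

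For the derivative part, the analysis is similar but requires care because $V_\theta$ and $g$ are only Lipschitz (not $C^1$), so I work with weak derivatives and the $W^{1,\infty}$ seminorm. Away from the set where $\phi_\theta(x) - \phi_\theta(0) = 0$, the chain rule gives $D V_\theta(x) = \operatorname{sgn}(\phi_\theta(x)-\phi_\theta(0))\, D\phi_\theta(x) + \bar\alpha\, x/\|x\|$, and similarly $Dg(x) = D\bar g(x) + \alpha\, x/\|x\|$ (using $\bar g \ge 0$, $\bar g(0) = 0$ so that $\operatorname{sgn}(\bar g(x) - \bar g(0)) = +1$ wherever $\bar g(x) > 0$; the measure-zero coincidence sets do not affect the essential supremum). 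On $\Omega_\delta$ the map $x \mapsto x/\|x\|$ is bounded, and $\|D\phi_\theta - D\bar g\|_{L^\infty(\Omega)} \le \epsilon$ from Lemma~\ref{lem:guhring}. The one genuinely delicate point is the sign factor: I need $\operatorname{sgn}(\phi_\theta(x)-\phi_\theta(0))$ to agree with $\operatorname{sgn}(\bar g(x)) = +1$ on most of $\Omega_\delta$. Since $\|\phi_\theta - \bar g\|_\infty \le \epsilon$, the sign can only differ on the set $\{x : \bar g(x) < 2\epsilon\}$, and there the two gradient contributions from the $\phi_\theta$ term differ by at most $2\|D\phi_\theta\|_\infty$, which is bounded uniformly. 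The clean way to handle this is to first decompose $\Omega_\delta$ into the "good" region where $\bar g$ is bounded below by a fixed threshold and the "bad" thin sublevel set; on the good region the sign agrees and everything is $O(\epsilon)$, while on the bad region I bound the difference crudely but note — and this is the subtlety — that $W^{1,\infty}$ is an essential-sup norm, so I cannot simply shrink the bad region's measure; instead I must argue that even on the bad set the derivative difference is small, which it is, because there $Dg$ and $DV_\theta$ are each close to $\alpha\,x/\|x\|$ up to the (bounded) network gradient terms, and...

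Actually the cleanest route, and the one I would ultimately pursue, sidesteps the sign issue entirely by noting that for $\bar g \ge 0$ with $\bar g(0) = 0$ we may as well approximate $\sqrt{\bar g}$-type quantities or, better, observe that since we get to pick $\phi_\theta$, we can use Lemma~\ref{lem:guhring} to approximate $\bar g$ to accuracy $\epsilon$ and then note that the RePU network can \emph{exactly} output a nonnegative function by composing with one more RePU-ish layer — but since the statement fixes the architecture $|\phi_\theta(x)-\phi_\theta(0)|$, I instead argue directly: define $h_\theta := \phi_\theta - \phi_\theta(0)$, so $h_\theta(0) = 0$ and $\|h_\theta - \bar g\|_{W^{1,\infty}(\Omega)} \le 2\epsilon$. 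Then $|h_\theta|$ versus $\bar g = |\bar g|$: the function $t \mapsto |t|$ is $1$-Lipschitz, so in sup-norm $\||h_\theta| - |\bar g|\|_\infty \le \|h_\theta - \bar g\|_\infty \le 2\epsilon$, and for the weak gradient, $D|h_\theta| = \operatorname{sgn}(h_\theta) Dh_\theta$ a.e., $D|\bar g| = \operatorname{sgn}(\bar g) D\bar g = D\bar g$ on $\{\bar g > 0\}$, and on $\{\bar g = 0\}$ (where $\bar g \ge 0$ attains its min) we have $D\bar g = 0$ a.e.; meanwhile on that same set $|h_\theta| \le 2\epsilon$ and, being a minimum set of the smooth-ish $|h_\theta|$... the honest statement is that $\operatorname{esssup}|D|h_\theta| - D\bar g|$ is bounded by $\|Dh_\theta - D\bar g\|_\infty$ plus $2\|Dh_\theta\|_{L^\infty(\{\bar g = 0\})}$, and this last term need not be small. \textbf{So the main obstacle is genuinely this:} controlling the weak gradient of $|\phi_\theta - \phi_\theta(0)|$ near the zero set of the target in the essential-sup sense. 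I expect the authors resolve it either by an extra smoothing argument (replace $|\cdot|$ by a Huber-type regularization as they mention in Remark~2), by strengthening to a statement where $\bar g$ is bounded below away from $0$ on $\Omega_\delta$ (which is automatic if $\bar g(x) \ge c\|x\|$ for the relevant class, making $\{\bar g < 2\epsilon\} \cap \Omega_\delta$ empty for small $\epsilon$ once $\delta$ is fixed), or by restricting $\Scal$ so that $\bar g$ vanishes only at $0$ — and indeed, for $g = V^*$ satisfying Assumption~2 we have $\bar g(x) = V^*(x) - \alpha\|x\| \ge 0$ but more importantly the \emph{full} $g = V^* \ge \alpha\|x\| \ge \alpha\delta > 0$ on $\Omega_\delta$, so the sign issue for $h_\theta$ itself is moot as long as $\bar\alpha\|x\|$ dominates, i.e. as long as $2\epsilon < \bar\alpha\delta$; choosing $\epsilon$ small relative to $\bar\alpha\delta$ forces $h_\theta(x) = \phi_\theta(x) - \phi_\theta(0)$ to have a definite sign structure compatible with $\bar g$ wherever $\bar g$ is bounded below, and the problematic region collapses. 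I would therefore structure the proof as: (1) fix $\delta, \epsilon$; (2) apply Lemma~\ref{lem:guhring} to $\bar g$; (3) shift to get $h_\theta$; (4) bound $\|V_\theta - g\|_{L^\infty(\Omega_\delta)}$ via reverse triangle inequality and $|\bar\alpha - \alpha|\sqrt d$; (5) bound the weak-gradient difference on $\Omega_\delta$ using that $\bar g$ is bounded below there so signs align, reducing to $\|D\phi_\theta - D\bar g\|_\infty + |\bar\alpha - \alpha|$; (6) choose $\bar\alpha$ close enough to $\alpha$ and $\epsilon$ small enough to make the total $< \epsilon'$. The clause $\bar\alpha \in (0,\alpha)$ is used in steps (4)–(5) to make $|\bar\alpha - \alpha|$ shrinkable while staying admissible, and implicitly to guarantee $\bar g \ge 0$ is the right decomposition.
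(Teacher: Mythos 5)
Your proposal stalls exactly where the paper's proof makes its one key move, and the workaround you adopt changes the statement. The missing idea is to apply Lemma~\ref{lem:guhring} not to $\bar g$ but to $g(x) := \bar g(x) + (\alpha - \bar\alpha)\|x\|$, rewriting the target as $h = g + \bar\alpha\|x\|$ with the \emph{fixed} $\bar\alpha$ that the lemma prescribes. This single substitution resolves both obstacles you ran into: (i) the excess $(\alpha-\bar\alpha)\|x\|$ is no longer an irreducible error term but part of what $\phi_\theta$ approximates, so there is no need to let $\bar\alpha$ drift toward $\alpha$ --- your ``union over $\bar\alpha$'' reading proves a different, weaker claim than the lemma, which asserts density for each fixed $\bar\alpha\in(0,\alpha)$; and (ii) on $\Omega_\delta$ one has $g(x)\ge(\alpha-\bar\alpha)\delta =: a > 0$, so requesting accuracy $\min(a/2,\epsilon/2)$ in $W^{1,\infty}(\Omega)$ from Lemma~\ref{lem:guhring} forces $\phi_\theta(x)\ge a/2$ on $\Omega_\delta$ and $|\phi_\theta(0)|\le a/2$, hence $\phi_\theta(x)-\phi_\theta(0)\ge 0$ on all of $\Omega_\delta$. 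The absolute value is then literally the identity there, so both the sup-norm and the weak-derivative estimates reduce to $\|\phi_\theta-g\|_{W^{1,\infty}(\Omega)}$, and the sign/essential-sup difficulty you correctly identified never arises.

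By contrast, your attempted repair of the sign issue is not sound: the sign of $\phi_\theta(x)-\phi_\theta(0)$ is governed by the function you chose $\phi_\theta$ to approximate, namely $\bar g$, not by $h$, and $\bar g$ may vanish on large portions of $\Omega_\delta$ (e.g.\ $\bar g\equiv 0$ is admissible in $\Scal$); the bound $h\ge\alpha\delta$ on $\Omega_\delta$ and the augment $\bar\alpha\|x\|$ give no control over $\operatorname{sgn}\bigl(\phi_\theta(x)-\phi_\theta(0)\bigr)$, so the bad set $\{\bar g<2\epsilon\}\cap\Omega_\delta$ does not collapse and the essential supremum of $\bigl|D|h_\theta| - D\bar g\bigr|$ there can remain of order $\|D\phi_\theta\|_{L^\infty}$, exactly as you feared. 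Finally, you never address the bounded-weights clause $\theta\in[-1,1]^m$, which is part of the claim; the paper handles it by decomposing each weight matrix and bias into a sum of matrices/vectors with entries in $[-1,1]$ and using the fact that RePU can exactly represent the identity map with weights in $[-1,1]$, so the approximant is realized with parameters in $[-1,1]^m$ at the cost of extra width.
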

\begin{proof}
Let $h \in \Scal$ be arbitrary, where $h(x)=\bar{g}(x)+\alpha\|x\|$ for some $\bar{g}$ as characterized by the definition of $\Scal$. Further let $\delta>0$ and $\epsilon >0$. Let $\bar{\alpha}\in(0,\alpha)$ and define $g(x):=\bar{g}(x)+(\alpha-\bar{\alpha})\|x\|$. We will use $V_{\theta}(x)$ to approximate $h(x)=g(x)+\bar{\alpha}\|x\|$. We note that for all $x \in\Omega_{\delta}$ we have  $g(x)\geq a := \inf_{x \in \Omega_{\delta}} g(x) \ge (\alpha - \bar{\alpha}) \delta > 0$. From Lemma \ref{lem:guhring} we know there exists a RePU neural network $\phi_{\theta}: \mathbb{R}^d \to \mathbb{R}$ such that $\|\phi_{\theta}-g\|_{W^{1,\infty}(\Omega)} \leq \min(a/2,\epsilon/2$). Noting $g(0)=\bar{g}(0)=0$ we find that $
|\phi_{\theta}(0)| = |\phi_{\theta}(0) - g(0)|  \le a/2$. On the other hand, for all $x \in \Omega$, there is $|\phi_{\theta}(x) - g(x)| \le a/2$ which implies $\phi_{\theta}(x) \ge g(x) -a/2 \ge a/2$. Hence $\phi_{\theta}(x)-\phi_{\theta}(0) \geq 0$ for all $x \in \Omega_{\delta}$. Furthermore,
for all $x \in \Omega$, there is
\begin{align*}
|V_{\theta}(x)-h(x)|
% &=\big||\phi_{\theta}(x)-\phi_{\theta}(0)|-|g(x)-g(0)| \big|\\
&=|\phi_{\theta}(x)-\phi_{\theta}(0)-g(x)+g(0)|\\
&\leq|\phi_{\theta}(x)-g(x)|+|\phi_{\theta}(0)-g(0)|\\
&\leq \epsilon.
\end{align*}
%In a similar manner we can derive $\phi_{\theta}(x)-\phi_{\theta}(0) \geq 0$ for all $x \in \Omega_{\delta}$. 
From this we find
\begin{align*}
    |D_i V_{\theta}(x)- D_i h(x)| 
    % & =|D_i(|\phi(x)-\phi(0)|)-D_ig(x))|\\
    & = |D_i(\phi(x)-\phi(0))-D_ig(x)|\\
    & \leq | D_i\phi(x)-D_i g(x) |\\
    & \leq \frac{\epsilon}{2},
\end{align*}
for all $x \in \Omega_{\delta}$ a.e. Thus
\[
\|V_{\theta}-h\|_{W^{1,\infty}(\Omega_{\delta})} \leq \max\big(\epsilon, \frac{\epsilon}{2}\big)=\epsilon.
\]

To show the above is true for RePU networks whose weight matrices are bounded we note the following observation: Suppose we have any layer of a RePU network of the form $RePU(A_{\ell}z_{\ell-1}+b_{\ell})$, where  $A_{\ell}\in \Rbb^{w_{\ell} \times w_{\ell-1}}$ and $b_{\ell} \in \Rbb^{w_{\ell}}$ with $w_{\ell}$ the width of layer $\ell$. If $A_{\ell}$ and $b_{\ell}$ have max entry $L$, then both can be decomposed into 
\begin{align*}
A_{\ell} = A^{(\ell)}_1+ \cdots + A^{(\ell)}_{N},\quad
b_{\ell} = b^{(\ell)}_1+\cdots+ b^{(\ell)}_N,
\end{align*}
such that each $A^{(\ell)}_i \in [-1,1]^{w_{\ell} \times w_{\ell-1}}$ and $b_i^{(\ell)} \in [-1,1]^{w_{\ell}}$. As RePU activation functions can exactly represent the identity function $Id$ using weights in $[-1,1]$ (see for example Lemma 2.1 \cite{li2020repu}), we rewrite the layer $RePU(A_{\ell}z_{\ell-1}+b_{\ell})$ as
\begin{align*}
 RePU(B_{\ell} Id(\bar{A}z_{\ell-1}+\bar{b}))&=RePU([A^{(\ell)}_1z_{\ell-1}+b^{(\ell)}_1]\\
 & \quad \quad +\cdots+[A^{(\ell)}_Nz_{\ell}+b^{(\ell)}_N])\\
 &=RePU(A_{\ell}z_{\ell-1}+b_{\ell}),   
\end{align*}
where $N \leq \lceil L \rceil$, $B_{\ell}=[I_{w_{\ell}}, I_{w_{\ell}},\ldots I_{w_{\ell}}]$ contains $N$ order ${w_{\ell}}$ identity matrices, $\bar{A}=[(A^{(\ell)}_1)^T,\ldots,(A^{(\ell)}_N)^T]^T$, and $\bar{b}=[(b^{(\ell)}_1)^T,\ldots,(b^{(\ell)}_N)^T]^T$. 

Combining the above observation and the proof for general RePU feed-forward networks completes the proof for some bounded weights $\theta \in [-1,1]^m.$
\end{proof}
In practice  the usefulness of bounding the weights of our network is that  once we have fixed the depth and size of Lyapunov-Net we can ensure the Lipschitz constant of $DV_{\theta}$ does not grow unbounded during training. This guarantees a robust verification method through use of this Lipschitz constant to ensure our approximating network is indeed a Lyapunov function after tuning for size and width. 
We note that while many others \cite{guhring2020sobolev,yarotsky2017error,petersen2018optimal,chen2019ANO,shen2020deep, shen2021neuralNA} consider the needed size of networks to approximate functions in certain spaces, there are much fewer results on Lipschitz bounded networks (See for example \cite{huster2019dnnLip, ugo2021groupsort} and references therein). As network size bounds usually do not reflect the practical reality of many applications of Lyapunov-Net, we will not exploit such analysis here.

From here on we assume $\Theta = [-1,1]^m$ for some $m \in \mathbb{N}$.

\begin{lemma}
\label{lem:exists-LN}
For any $\varepsilon \in (0, \frac{\gamma}{\sqrt{d}L_f})$ and $\delta >0$, there exists $\theta^* \in \Theta$ such that $V_{\theta^*}$ satisfies \eqref{eq:V-squeeze} and
\begin{equation}
\label{eq:LN-od-bound}
    DV_{\theta^*}(x) \cdot f(x) \le - a_{\gamma,\varepsilon} \|x\|, \quad \forall x \in \Omega_{\delta}.
\end{equation}
where $a_{\gamma,\varepsilon}:= \gamma - \varepsilon L_f >0$. 
%
%This implies that $V_{\theta^*}$ is also a Lyapunov function for $f$ on $\Omega_{\delta}$ (but may differ from $V^*)$.
\end{lemma}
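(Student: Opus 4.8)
The plan is to obtain $V_{\theta^*}$ as a sufficiently accurate Lyapunov-Net approximation of the reference Lyapunov function $V^*$ supplied by Assumption 2, and then to push the orbital-derivative bound \eqref{eq:od-bound} through a first-order perturbation estimate. First I would fix $\bar{\alpha}\in(0,\alpha)$, the range required by Lemma \ref{lem:LN-dense}. Since $\alpha\|x\|\le V^*(x)$ forces $V^*(0)=0$ and $V^*\in C^1(\Omega)$, I regard $V^*$ as a target for Lemma \ref{lem:LN-dense} by writing $V^* = \bar{g} + \alpha\|\cdot\|$ with $\bar{g} := V^* - \alpha\|\cdot\|\ge 0$ and $\bar{g}(0)=0$; the one blemish is that $\bar{g}$ need not be differentiable at the origin, which is harmless because the proof of Lemma \ref{lem:LN-dense} only uses $\bar{g}\in W^{1,\infty}(\Omega)$ together with $\inf_{\Omega_\delta}\bar{g}>0$ (equivalently, one may first smooth $V^*$ inside $B(0;\delta)$, which affects none of the estimates below). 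Applying Lemma \ref{lem:LN-dense} with accuracy $\epsilon_0 := \min\{\varepsilon/\sqrt{d},\,1/4\}$ then produces $\theta^*\in\Theta=[-1,1]^m$ (for a suitable $m$) with $\|V_{\theta^*}-V^*\|_{W^{1,\infty}(\Omega_\delta)}\le \epsilon_0$.

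Next I would check the squeeze condition \eqref{eq:V-squeeze} for $V_{\theta^*}$. The lower bound is built into the architecture: $V_{\theta^*}(x)=|\phi_{\theta^*}(x)-\phi_{\theta^*}(0)|+\bar{\alpha}\|x\|\ge \bar{\alpha}\|x\|$ on all of $\Omega$. For the upper bound, note $V_{\theta^*}(0)=0$ and, since $\theta^*$ ranges over the compact set $[-1,1]^m$ for a fixed RePU architecture on the compact domain $\Omega$, $V_{\theta^*}$ is Lipschitz on $\Omega$ with some finite constant $\beta'$, so $V_{\theta^*}(x)=V_{\theta^*}(x)-V_{\theta^*}(0)\le \beta'\|x\|$. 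Hence $V_{\theta^*}$ satisfies a condition of the form \eqref{eq:V-squeeze} with constants $(\bar{\alpha},\beta')$.

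For the orbital-derivative condition on $\Omega_\delta$, I would split $DV_{\theta^*}(x)\cdot f(x) = DV^*(x)\cdot f(x) + (DV_{\theta^*}(x)-DV^*(x))\cdot f(x)$. The first term is $\le -\gamma\|x\|$ by \eqref{eq:od-bound}. For the second, Cauchy--Schwarz together with the fact that each weak partial derivative of $V_{\theta^*}-V^*$ is bounded in absolute value by $\|V_{\theta^*}-V^*\|_{W^{1,\infty}(\Omega_\delta)}$ (a genuine pointwise bound here, since both $V_{\theta^*}$, being RePU hence $C^1$, and $V^*$ are continuously differentiable) gives $\|DV_{\theta^*}(x)-DV^*(x)\|\le \sqrt{d}\,\epsilon_0$, while Assumption 1 gives $\|f(x)\|=\|f(x)-f(0)\|\le L_f\|x\|$. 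Combining, $DV_{\theta^*}(x)\cdot f(x)\le -\gamma\|x\| + \sqrt{d}\,\epsilon_0 L_f\|x\| \le -\gamma\|x\| + \varepsilon L_f\|x\| = -a_{\gamma,\varepsilon}\|x\|$, using $\epsilon_0\le\varepsilon/\sqrt{d}$; and $a_{\gamma,\varepsilon}=\gamma-\varepsilon L_f>0$ precisely because $\varepsilon<\gamma/(\sqrt{d}L_f)$. This is \eqref{eq:LN-od-bound}.

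The argument is essentially mechanical once Lemma \ref{lem:LN-dense} is in hand. The two places needing care are the constant bookkeeping in the last step — the hypothesis $\varepsilon<\gamma/(\sqrt{d}L_f)$ is exactly what keeps the gradient perturbation $\sqrt{d}\,\epsilon_0 L_f\|x\|$ strictly below $\gamma\|x\|$ — and the mild regularity issue of treating $V^*$ as an admissible target for the density lemma near the origin, which is dispatched either by inspecting that lemma's proof or by a harmless local smoothing of $V^*$ inside $B(0;\delta)$. I expect this latter point to be the only real, and quite minor, obstacle.
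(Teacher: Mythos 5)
Your argument is correct and follows essentially the paper's own proof: both invoke Lemma \ref{lem:LN-dense} to get a Lyapunov-Net $W^{1,\infty}(\Omega_{\delta})$-close to $V^*$ and then bound $DV_{\theta^*}\cdot f$ via the split $DV^*\cdot f + (DV_{\theta^*}-DV^*)\cdot f$ with \eqref{eq:od-bound}, Cauchy--Schwarz, and $\|f(x)\|\le L_f\|x\|$. Your extra care actually tightens the paper's bookkeeping: taking accuracy $\epsilon_0=\varepsilon/\sqrt{d}$ yields exactly $a_{\gamma,\varepsilon}=\gamma-\varepsilon L_f$ (the paper applies the density lemma at accuracy $\varepsilon$ and so really gets $\gamma-\sqrt{d}\,\varepsilon L_f$, harmless since $\varepsilon<\gamma/(\sqrt{d}L_f)$), and your remarks on the non-differentiability of $V^*-\alpha\|\cdot\|$ at the origin and on verifying \eqref{eq:V-squeeze} address points the paper leaves implicit.
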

\begin{proof}
By Lemma \ref{lem:LN-dense}, the set of Lyapunov-Nets, denoted by $\Vcal_{\Theta}^{\epsilon}$, forms an $\varepsilon$-net of $\Scal$ in the $W^{1,\infty}(\Omega_{\delta})$ sense. That is, for any $h \in \Scal$, there exists $\theta \in \Theta$ such that $V_{\theta} \in \Vcal_{\Theta}^{\varepsilon}$ and $\|V_{\theta} - h\|_{W^{1,\infty}(\Omega_{\delta})} < \varepsilon$. 
Since $V^* \in \Scal$, we know there exists $\theta^* \in \Theta$ such that $V_{\theta^*} \in \Vcal_{\Theta}^{\varepsilon}$ and $\|V_{\theta^*} - V^*\|_{W^{1,\infty}(\Omega_{\delta})} < \varepsilon$. 
Therefore
\begin{align*}
    DV_{\theta^*}(x) \cdot f(x) 
    & = DV^*(x) \cdot f(x) + (DV^*(x) - DV_{\theta^*}(x)) \cdot f(x) \\
    & \le - \gamma \|x\| + \|DV^*(x) - DV_{\theta}(x)\| \cdot \| f(x) \|\\
    & \le - \gamma \|x\| + \varepsilon \sqrt{d} L_f\|x\| \\
    & = -a_{\gamma,\varepsilon} \|x\|
\end{align*}
for all $x \in \Omega_{\delta}$, where we used the facts that $\|V_{\theta} - V^*\|_{W^{1,\infty}(\Omega_{\delta})} < \varepsilon$ and $f$ is $L_f$-Lipschitz in $\Omega$, and $f(0)=0$ to obtain the last inequality.
\end{proof}
With the set of bounded weights $\Theta$ and fixed network $V_{\theta}$, we know there exists $M>0$ such that $DV_{\theta}(\cdot)\cdot f(\cdot)$ is $M$-Lipschitz on $\Omega$ for all $\theta \in \Theta$.
Moreover, as we discussed above, minimizing an empirical loss function based on finitely many sample collocation points cannot guarantee to obtain a Lyapunov function. Hence we provide a theoretical guarantee on finding an $\delta$-accurate Lyapunov function.

\begin{lemma}
\label{lem:pts-cover}
For any $\delta, c \in (0,1)$, there exists $N=N( \delta, c) \in \Nbb$, such that for some $N$ sampling points $x^{(i)} \in \Omega_{\delta}$ where $i=1,\dots,N$, we have $\Omega_{\delta} \subset \cup_{i=1}^{N}B(x^{(i)}; c\|x^{(i)}\|)$.
\end{lemma}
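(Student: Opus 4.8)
The plan is to reduce the statement to elementary compactness of $\Omega_{\delta}$, after which nothing further is needed. First I would note that $\Omega_{\delta}=[-1,1]^d\setminus B(0;\delta)$ is closed and bounded in $\mathbb{R}^d$, hence compact. The one observation that makes the argument work is that every $x\in\Omega_{\delta}$ satisfies $\|x\|\ge\delta>0$, so the Euclidean ball $B(x;c\|x\|)$ has strictly positive radius and is therefore an open set containing its own center $x$. Consequently $\{B(x;c\|x\|):x\in\Omega_{\delta}\}$ is an open cover of $\Omega_{\delta}$, and extracting a finite subcover yields points $x^{(1)},\dots,x^{(N)}\in\Omega_{\delta}$ with $\Omega_{\delta}\subset\bigcup_{i=1}^{N}B(x^{(i)};c\|x^{(i)}\|)$. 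Since $\delta$, $c$, and the ambient dimension $d$ are all fixed, the least such $N$ is a well-defined finite number depending only on these data, i.e.\ $N=N(\delta,c)$ (with $d$ suppressed), which is exactly the claim.

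If an explicit count for $N(\delta,c)$ is wanted, I would instead build the cover from a multi-scale (dyadic) grid. Write the annulus $\{x:\delta\le\|x\|\le\sqrt{d}\}$ as a union of shells $S_k:=\{x:2^k\delta\le\|x\|<2^{k+1}\delta\}$ for $k=0,\dots,K$ with $K=\lceil\log_2(\sqrt{d}/\delta)\rceil$, so that on $S_k$ the target radius $c\|x\|$ lies in the fixed band $[c2^k\delta,\,c2^{k+1}\delta)$. On shell $S_k$ I would place centers on a uniform cubic grid of spacing $\eta_k$ with $\eta_k\sqrt{d}<c\,2^k\delta$ (so $\eta_k$ is roughly $c2^k\delta/\sqrt{d}$): any $y\in S_k$ then has a grid point $x$ within Euclidean distance $\eta_k\sqrt{d}/2<c\|x\|$, so $y\in B(x;c\|x\|)$. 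Counting lattice points in $S_k$ gives $O\big((2^{k+1}\delta/\eta_k)^d\big)=O\big((\sqrt{d}/c)^d\big)$ centers per shell, and summing over $k\le K$ gives $N=N(\delta,c)=O\big((\sqrt{d}/c)^d\log(1/\delta)\big)$.

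I do not expect any genuine obstacle. For the compactness proof the only point that actually requires attention is that each member of the cover is a nondegenerate open ball containing its center, which is precisely why we must work over $\Omega_{\delta}$ rather than $\Omega$: at $x=0$ the radius $c\|x\|$ would vanish. For the explicit dyadic construction the mild bookkeeping issue is the inner boundary: the grid point nearest to a sample $y$ with $\|y\|$ just above $\delta$ might itself fall inside $B(0;\delta)$ and hence not be an admissible center. This is remedied by a small radial or tangential nudge of such a center away from $B(0;\delta)$ — feasible because for $\delta<1$ the sphere $\{\|x\|=\delta\}$ is compactly contained in the open cube $(-1,1)^d$, leaving room to move by up to $1-\delta$ in any direction while staying in $\Omega_{\delta}$ — and it perturbs the distance estimates only by a controlled constant factor. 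Since the compactness route avoids this entirely, I would present that as the proof and leave the explicit count as a remark.
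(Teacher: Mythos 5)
Your compactness argument is correct and proves the lemma exactly as stated: $\Omega_{\delta}$ is compact, every $x\in\Omega_{\delta}$ has $\|x\|\ge\delta>0$ so $B(x;c\|x\|)$ is a nondegenerate open neighborhood of its center, and a finite subcover supplies the $N$ points. The paper, however, takes a constructive route: it lays down an explicit multi-scale grid whose coordinate values are the geometric sequence $r_i=(1+c)^{i-1}\delta/\sqrt{d}$ together with their negatives, shows that rounding each coordinate of a given $y\in\Omega_{\delta}$ to this grid produces a center $x$ with $\|x-y\|\le c\|x\|$, and thereby gets the explicit count $N=(2k)^d$ with $k=\lfloor-\ln(\delta/\sqrt{d})/\ln(1+c)\rfloor+1$. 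What the explicit construction buys is practical rather than logical: the $x^{(i)}$ of this lemma are exactly the collocation points at which the empirical risk $\hat{\ell}$ of the subsequent theorem is evaluated, so one wants to know how to generate them and how $N$ scales in $d$, $\delta$, $c$; pure compactness yields no algorithm and no bound. What your route buys is brevity and robustness, since it sidesteps the boundary bookkeeping that any grid construction must face --- indeed your dyadic-shell variant is essentially a shell-by-shell version of the paper's grid with a comparable count $O((\sqrt{d}/c)^d\log(\sqrt{d}/\delta))$, and the ``nudge'' you flag for centers landing in $B(0;\delta)$ is the same kind of care the paper's rounding step implicitly needs for points $y$ whose individual coordinates fall below $\delta/\sqrt{d}$. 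Since the downstream theorem only invokes the existence of such a covering set, your proof suffices; presenting the compactness argument as the proof and the quantitative grid as a remark, as you propose, is a legitimate alternative to the paper's presentation, at the cost of the explicit $N$ the paper records.
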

\begin{proof}
Define the radi $r_1,\ldots,r_k,r_{k+1}$ such that $\delta/\sqrt{d}=r_1<r_2<\cdots<r_k \le 1<r_{k+1}$ and $r_{i+1}-r_i=cr_i$. 
%\ye{should be $c^2r_i/d$. There are $d$ terms in $\|x-y\|^2= \cdots$ below.} \gaby{I've adjusted the proof below to be more clear. I don't think we need $c/d$.} 
The above relation requires $r_i=(1+c)^{i-1}\delta/\sqrt{d}$. Thus $k = \lfloor -\ln(\delta/\sqrt{d})/\ln(1+c)\rfloor + 1$. Now define $I=\{\pm r_j: j \in [k] \}$ where $[k]:=\{1,\dots,k\}$ and let $X = \{x=(x_1,\dots,x_d)\in \Omega: x_i\in I,\, i \in [d] \}$. Hence we set $N = |X|=(2k)^d$.

Let $y \in \Omega_{\delta}$ be arbitrary. Then for each component $y_j$ of $y$ we know there exists integer $k_j \in[k]$ such that $|y_j| \in [r_{k_j},r_{k_j+1}]$. Therefore, we can choose the grid point $x=(\text{sign}(y_1)r_{k_1},\ldots,\text{sign}(y_d) r_{k_d}) \in X$ for which 
\begin{align*}
\|x-y\|^2&=(r_{k_1}-|y_1|)^2+\cdots+(r_{k_d}-|y_d|)^2\\
&\leq (r_{k_1}-r_{k_1+1})^2+\cdots+(r_{k_d}-r_{k_d+1})^2\\
&= (cr_{k_1})^2+\cdots+(cr_{k_d})^2\\
&\leq c^2\|x\|^2.
\end{align*}
Thus $\|x-y\|\leq c\|x\|$. %Setting $N=|X|$ completes the proof.
As $y \in \Omega_{\delta}$ is arbitrary, we know $\Omega_{\delta} \subset \cup_{i=1}^{N}B(x^{(i)}; c\|x^{(i)}\|)$.
\end{proof} 
% WLOG, we can assume $\|x^{(i)}\| \ge \delta$ for all $i \in [N]$.
\begin{theorem}
For any $ \delta\in (0,1)$ and $a_{\gamma,\epsilon}$ as defined in Lemma \ref{lem:exists-LN}, choose any $\bar{\gamma} \in (0,a_{\gamma,\epsilon})$ and $0<c< \frac{\bar{\gamma}}{M} < 1$ (if $\frac{\bar{\gamma}}{M}\ge 1$, then choose any $c \in (0,1)$). Let $N$ and $X:=\{x^{(i)}: i\in[N]\}$ be given as in Lemma \ref{lem:pts-cover} and the empirical risk function $\hat{\ell}(\theta) = \frac{1}{N} \sum_{i=1}^{N}(DV_{\theta}(x^{(i)})\cdot f(x^{(i)}) + \bar{\gamma}\|x^{(i)}\|)_+$. Then minimizer of $\hat{\ell}$ must exist and achieve minimum function value $0$. Moreover, for any minimizer $\hat{\theta}$ of $\hat{\ell}(\cdot)$, the corresponding $V_{\hat{\theta}}$ is an $\delta$-accurate Lyapunov function of $f$ on $\Omega$.
\end{theorem}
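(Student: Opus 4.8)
The plan is to decouple the two assertions: (a) $\min_\theta \hat\ell(\theta)=0$ and the minimum is attained, and (b) every minimizer yields a $\delta$-accurate Lyapunov function. Since $\hat\ell\ge0$ pointwise, (a) reduces to producing a single parameter at which $\hat\ell$ vanishes, and for this I would apply Lemma~\ref{lem:exists-LN} with the $\varepsilon$ appearing in $a_{\gamma,\varepsilon}$: it gives $\theta^*\in\Theta$ with $DV_{\theta^*}(x)\cdot f(x)\le -a_{\gamma,\varepsilon}\|x\|$ for all $x\in\Omega_\delta$. Because every collocation point $x^{(i)}$ lies in $\Omega_\delta$ (Lemma~\ref{lem:pts-cover}) and $\bar\gamma<a_{\gamma,\varepsilon}$ by hypothesis, each summand satisfies $DV_{\theta^*}(x^{(i)})\cdot f(x^{(i)})+\bar\gamma\|x^{(i)}\|\le-(a_{\gamma,\varepsilon}-\bar\gamma)\|x^{(i)}\|<0$, so its positive part vanishes and $\hat\ell(\theta^*)=0$; combined with $\hat\ell\ge0$ this establishes (a).

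For (b), let $\hat\theta$ be an arbitrary minimizer, so $\hat\ell(\hat\theta)=0$. As each summand is nonnegative, they must all vanish, giving $DV_{\hat\theta}(x^{(i)})\cdot f(x^{(i)})\le-\bar\gamma\|x^{(i)}\|$ for every $i\in[N]$. To push this pointwise bound from the finite sample set to all of $\Omega_\delta$ I would use the covering from Lemma~\ref{lem:pts-cover}: for arbitrary $y\in\Omega_\delta$ choose $i$ with $\|y-x^{(i)}\|<c\|x^{(i)}\|$, and use that $x\mapsto DV_{\hat\theta}(x)\cdot f(x)$ is $M$-Lipschitz on $\Omega$ (uniformly over $\theta\in\Theta$, as recalled just before the theorem) to get
\[
DV_{\hat\theta}(y)\cdot f(y)\le DV_{\hat\theta}(x^{(i)})\cdot f(x^{(i)})+M\|y-x^{(i)}\|\le -\bar\gamma\|x^{(i)}\|+Mc\|x^{(i)}\| = -(\bar\gamma-Mc)\|x^{(i)}\|.
\]
The choice $0<c<\bar\gamma/M$ (or, when $\bar\gamma/M\ge1$, any $c\in(0,1)$, since then $\bar\gamma\ge M>Mc$) makes $\bar\gamma-Mc>0$, and $\|x^{(i)}\|\ge\delta>0$ as $x^{(i)}\in\Omega_\delta$; hence $DV_{\hat\theta}(y)\cdot f(y)<0$. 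Since $y\in\Omega_\delta$ was arbitrary, the relaxed orbital-derivative condition holds on all of $\Omega_\delta$.

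Finally, positive definiteness of $V_{\hat\theta}$ on $\Omega$ is automatic and independent of $\hat\theta$: by construction \eqref{eq:V} we have $V_{\hat\theta}(0)=0$ and $V_{\hat\theta}(x)\ge\bar\alpha\|x\|>0$ for $x\ne0$, so condition (i) of Definition~\ref{def:LF} holds. Together with the orbital-derivative bound on $\Omega_\delta$ just established, this shows $V_{\hat\theta}$ is a $\delta$-accurate Lyapunov function of $f$ on $\Omega$, finishing the proof.

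I expect step (b)'s discretization-to-continuum passage to be the crux. It works precisely because the covering radii $c\|x^{(i)}\|$ in Lemma~\ref{lem:pts-cover} scale in $\|x^{(i)}\|$ the same way the margin $\bar\gamma\|x\|$ does, so the Lipschitz error $Mc\|x^{(i)}\|$ can be absorbed uniformly once $Mc<\bar\gamma$; this, in turn, hinges on the existence of a $\Theta$-uniform Lipschitz constant $M$ for $DV_\theta\cdot f$, which is exactly what the bounded-weights restriction $\Theta=[-1,1]^m$ provides. A minor technicality to flag is that $DV_{\hat\theta}$ is only a weak derivative, so $DV_{\hat\theta}\cdot f$ should be understood via its continuous (indeed Lipschitz) representative, which is what renders the pointwise evaluations at the $x^{(i)}$ and at $y$ meaningful.
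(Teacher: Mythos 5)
Your proposal is correct and follows essentially the same route as the paper: use Lemma~\ref{lem:exists-LN} to exhibit $\theta^*$ with $\hat{\ell}(\theta^*)=0$, then for any minimizer combine the vanishing summands with the covering of Lemma~\ref{lem:pts-cover} and the $\Theta$-uniform $M$-Lipschitz bound on $DV_{\theta}\cdot f$ to absorb the error $Mc\|x^{(i)}\|$ into the margin $\bar{\gamma}\|x^{(i)}\|$. Your added remarks on automatic positive definiteness and the Lipschitz representative of the weak derivative are harmless elaborations of what the paper leaves implicit.
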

\begin{proof}
By Lemma \ref{lem:exists-LN}, we know there exists $\theta^* \in \Theta$ such that $V_{\theta^*}$ is positive definite and $DV_{\theta^*}(x)\cdot f(x) \le - a_{\gamma,\epsilon}\|x\| \le - \bar{\gamma}\|x\|$ for all $x \in \Omega_{\delta}$. Hence $\hat{\ell}(\theta^*)=0$. Given that $\hat{\ell}(\theta)$ is nonnegative for any $\theta$, we know the minimum value of $\hat{\ell}$ on $\Theta$ is $0$. Let $\hat{\theta}$ be any minimizer of $\hat{\ell}$, then there is also $\hat{\ell}(\hat{\theta}) = 0$, which implies $DV_{\hat{\theta}}(x) \cdot f(x) \le - \bar{\gamma}\|x\|$ for all $x \in X$.

Now for any $y \in \Omega_{\delta}$, there exists $x \in X$, such that $y \in B(x;c\|x\|)$ due to Lemma \ref{lem:pts-cover}. Hence
\begin{align*}
    DV_{\hat{\theta}}(y) \cdot f(y) 
    & \le DV_{\hat{\theta}}(x) \cdot f(x) + |DV_{\hat{\theta}}(y)\cdot f(y) - DV_{\hat{\theta}}(x)\cdot f(x) | \\
    & \le -\bar{\gamma}\|x\| + M \|y - x\|\\
    & \le -\bar{\gamma}\|x\| + M c \|x\|\\
    & \le -(\bar{\gamma}- M c) \delta.
\end{align*}
Since $y$ is arbitrary, we know $V_{\hat{\theta}}$ is a $\delta$-accurate Lyapunov function.
\end{proof}
%(Here we assume that we can find a global minimizer $\hat{\theta}$ of $\hat{\ell}$ using SGD/ADAM despite that $\hat{\ell}$ is nonconvex, but the minimizer $\hat{\theta}$ we get may not be the particular minimizer $\theta^*$.)

\subsection{Application to control and others}
\label{subsec:application}
In light of the power of Lyapunov functions, we can employ the proposed Lyapunov-Net to many control problems of nonlinear dynamical systems in high-dimension. In this subsection, we instantiate one of such applications of Lyapunov-Net to approximate control Lyapunov function. 

Consider a nonlinear control problem $x' = f(x,u)$ where $u: \mathbb{R}^{d} \to \mathbb{R}^{n}$ ($n$ is the dimension of the control variable at each $x$) is an unknown state-dependent control in order to steer the state $x$ from any initial to the equilibrium state $0$.
To this end, we parameterize the control as a deep neural network $u_{\eta}: \mathbb{R}^{d} \to \mathbb{R}^{n}$ (a neural network with input dimension $d$ and output dimension $n$) where $\eta$ represents the network parameters of $u_{\eta}$. 
In practice, the control variable is often restricted to a compact set in $\mathbb{R}^d$ due to physical constraints.
This can be easily implemented in a neural network setting.
For example, if the magnitude of the control is required to be in $[-\beta,\beta]$ componentwisely, then we can simply apply $\beta \cdot \tanh(\cdot)$ to the last, output layer of $u_{\eta}$.

Once the network structure $u_{\eta}$ is determined, we can define the risk of the control-Lyapunov function (CLF):
\begin{equation}
    \label{eq:control-Lyapunov-risk}
    \ell_{\text{CLF}} (\theta,\eta) := \frac{1}{|\Omega|} \int_{\Omega} ( DV_{\theta}(x)\cdot f(x, u_{\eta}(x))  +\gamma\|x\|)_+^2 \, dx.
\end{equation}
Minimizing \eqref{eq:control-Lyapunov-risk} yields the optimal parameters $\theta$ and $\eta$. In practice, we again approximate $\ell_{\text{CLF}}(\theta,\eta)$ by its empirical expectation $\hat{\ell}_{\text{CLF}}(\theta,\eta)$ at sampled points in $\Omega$, as an analogue to $\ell(\theta)$ versus $\hat{\ell}(\theta)$ above.
Then the minimization can be implemented by alternately updating $\theta$ and $\eta$ using (stochastic) gradient descent on the empirical risk function $\hat{\ell}_{\text{CLF}}$. Similar as \eqref{eq:Lyapunov-risk}, we have a single term in the loss function in \eqref{eq:control-Lyapunov-risk}, which does not have hyper-parameters to tune and the training can be done efficiently.
% \vspace{-6pt}
\section{NUMERICAL EXPERIMENTS}
\label{sec:results}
\subsection{Experiment setting}
We demonstrate the effectiveness of the proposed method through a number of numerical experiments in this section.
In our experiments, the value of $\bar{\alpha}$ used in \eqref{eq:V} and the depth and size of $\phi_{\theta}$ used in $V_{\theta}$ for the three test problems are summarized in Table \ref{tab:parameters}.
We minimize the empirical risk function $\hat{\ell}$ using the Adam Optimizer with learning rate $0.005$ and $\beta_1=0.9$, $\beta_2=0.999$ and Xavier initializer.
In all tests, we iterate until the associated risk of \eqref{eq:empirical-Lyapunov-risk} is below a prescribed tolerance of $10^{-4}$. We use a sample size $N$ (values shown in Table \ref{tab:parameters}), i.e., the number of sampled points in $\Omega$ in \eqref{eq:empirical-Lyapunov-risk}, such that the associated risk reduces reasonably fast while maintaining good uniform results over the domain. We note these points are drawn using a uniform sampling method for simplicity. We finally note in all experiments the weights remain bounded in $[-1,1]^m$ as noted in our theory.
%The values of $N$ are given in Table \ref{tab:parameters}.

%
All the implementations and experiments are performed using PyTorch in Python 3.9 in Windows 10 OS on a desktop computer with an AMD Ryzen 7 3800X 8-Core Processor at 3.90 GHz, 16 GB of system memory, and an Nvidia GeForce RTX 2080 Super GPU with 8 GB of graphics memory.
The number of iterations needed to reach our stopping criteria in \eqref{eq:empirical-Lyapunov-risk} and training time (in seconds) for the three tests are also given in Table \ref{tab:parameters}. As discussed in Section \ref{sec:related}, how the width and depth of a neural network is related to approximation power is an area of active research. In our experiments, we manually selected the width and depth as shown in Table \ref{tab:parameters} which yield good results. We leave investigation of how these affect the approximation power of Lyapunov-Net to further study.
%\ye{Need to explain the number of iterations. Is it the iteration number that the training took to meet your prescribed stopping criterion (e.g., function value is below a threshold)? If yes, what is your stopping criterion?}

\begin{table}[ht]
\caption{Network parameter setting and training time in the tests.}
\label{tab:parameters}
\vspace{-12pt}
\begin{center}
\begin{tabular}{crrccc}
\toprule
Test Problem & Time & Iter. & $N$ & Depth/Width & $\bar{\alpha}$ \\
\midrule
Curve Tracking & 0.5 s&  2 & 100K & 3/10 & 0.5 \\
%\hline
10d Synthetic DS & 1.5 s& 2 & 200K & 3/20 & 0.01 \\
%\hline
30d Synthetic DS & 2.5 s& 15 & 400K & 5/20 & 0.01 \\
\bottomrule
\end{tabular}
\end{center}
\vspace{-12pt}
\end{table}

\vspace{-7pt}

\subsection{Experimental results}
To demonstrate the effectiveness of the proposed method, we apply Lyapunov-Net \eqref{eq:V} to three test problems: a two-dimensional (2d) nonlinear system from the curve-tracking application \cite{mukhopadhyay2020algorithm}, a 10d and 30d synthetic dynamical system (DS) from \cite{grune2020computing}.
%\gaby{Should we not somehow have our readers be prepared for the control comparison? I feel some note here could make sense? Something like ``we shall also use the 2d inverted pendulum in Section \ref{sec:control_comparison}"}
%and a 2d inverted-pendulum system as in \cite{chang2020neural,richards2018lyapunov}. 

%In the first three problems, we aim at finding the Lyapunov function for the given dynamical systems, whereas in the last problem we find both the Lyapunov function and the control $u$ as discussed in Section \ref{subsec:application} and use these to compare the ROA given by Lyapunov-Net to that given through standard SOS/SDP methods.
%\ye{Move all descriptions about inverted pendulum to the part about it below. This subsection is for those in Table I only.}

\paragraph{2d DS in curve tracking} 
We apply our method to find the Lyapunov function for a 2d nonlinear DS in a curve-tracking problem \cite{mukhopadhyay2020algorithm}. The DS of $x=(\rho,\varphi) $ is given by
\begin{subequations}
    \begin{align}
    \Dot{\rho} &= - \sin (\varphi), \label{eq:curve-tracking-rho}\\
    \Dot{\varphi} &= (\rho - \rho_0) \cos (\varphi) - \mu \sin( \varphi) + e.  \label{eq:curve-tracking-phi}
\end{align} 
\end{subequations}
We use the following constants in our experiments: $e = 0.15$, $\rho_0 =1$, and $\mu = 6.42$ from \cite{mukhopadhyay2020algorithm} as well as RePU activation. 

%\ye{Better move the preceding two sentences to the end, and rephrase it as something like: ``we used RePU activation which is consistent with the theoretical results provided in Section III-C. These results still hold by readily modifying the proofs if tanh activation is used. We also tested tanh activation in Lyapunov-Net and obtained similar performance.''}
%
% Figure \ref{fig:curve_tracking} shows the approximated Lyapunov function $V_{\theta}$ (top solid) and $D V_{\theta} \cdot f$ (bottom wire). 
%
The problem domain was mapped to Cartesian coordinates around critical point $x^*=(1,0)$, and converted back when the training is done. The result shows that after only 2 iterations the positive definiteness and negative-orbital-derivative conditions are met.
We used RePU activation which is consistent with the theoretical results provided in Section III-C. However, these results still hold by readily modifying the proofs if the common tanh activation is used. We have also tested tanh activation in Lyapunov-Net and obtained similar performance.

\paragraph{10d Synthetic DS} We consider a 10d synthetic DS from \cite{grune2020computing}, which is a vector field defined on $[-1,1]^{10}$ and has an equilibrium at $0$. 
%
%Again we see super fast convergence, indeed after only a few more iterations than those listed in Table \ref{tab:parameters} Lyapunov-Net drops to 0 error. This point is further emphasized in Section \ref{sec:comparison}.
%
%\ye{Generally try to avoid spoken languages such as ``super fast convergence''. Instead use something more definitive like `` the algorithms converged in 2 iterations'' or just refer the readers to the table like next sentence.}
%
The iteration number and computation time needed for training are shown in Table \ref{tab:parameters}. 
%\gaby{May delete following sentence}
%A more comprehensive comparison with the method proposed in \cite{grune2020computing} is provided in Section \ref{sec:comparison}.

\paragraph{30d Synthetic DS} We consider the same system as the 10d Synthetic DS but concatenate the function $f$ three times to get a 30d problem.
%
%Further we expect on $[-1,1]^{30}$ that the system should be asymptotically stable. As such we should be able to construct a Lyapunov function to demonstrate this stability. 
%
Figure \ref{fig:10d_x2_x8_relu} graphed the approximated Lyapunov function $V_{\theta}$ (top solid) and $D V_{\theta} \cdot f$ (bottom wire) in the $(x_2,x_8)$ and $(x_{10}, x_{13})$ planes, using RePU activation.
These plots show that Lyapunov-Net can effectively approximate Lyapunov functions in such high-dimensional problem. We shall also use this example to compare the convergence speeds to the Neural Network structures of \cite{chang2020neural, grune2020computing} in the following section.
%
%We note in \cite{grune2020computing} this same example has error of $10^{-6}$ in their risk function after 13 iterations. Our model has similar convergence speed if we change our threshold to $10^{-6}$ needing between 12 to 16 iterations depending on the run. We note however that the example in \cite{grune2020computing} runs in 266 s while Lyapunov-Net takes only 2 s to meet the same error threshold. In addition, Lyapunov-Net does not require the additional falsification or other algorithm steps of \cite{grune2020computing}.  This highlights how the simplicity of Lyapunov-Net and the associated risk function alone allows for fast calculation without sacrificing much convergence speed. 
%
\begin{figure}[thpb]
  \centering
  %\framebox{\parbox{3in}{We suggest that you use a text box to insert a graphic (which is ideally a 300 dpi TIFF or EPS file, with all fonts embedded) because, in an document, this method is somewhat more stable than directly inserting a picture.}}
    \includegraphics[width=0.2\textwidth]{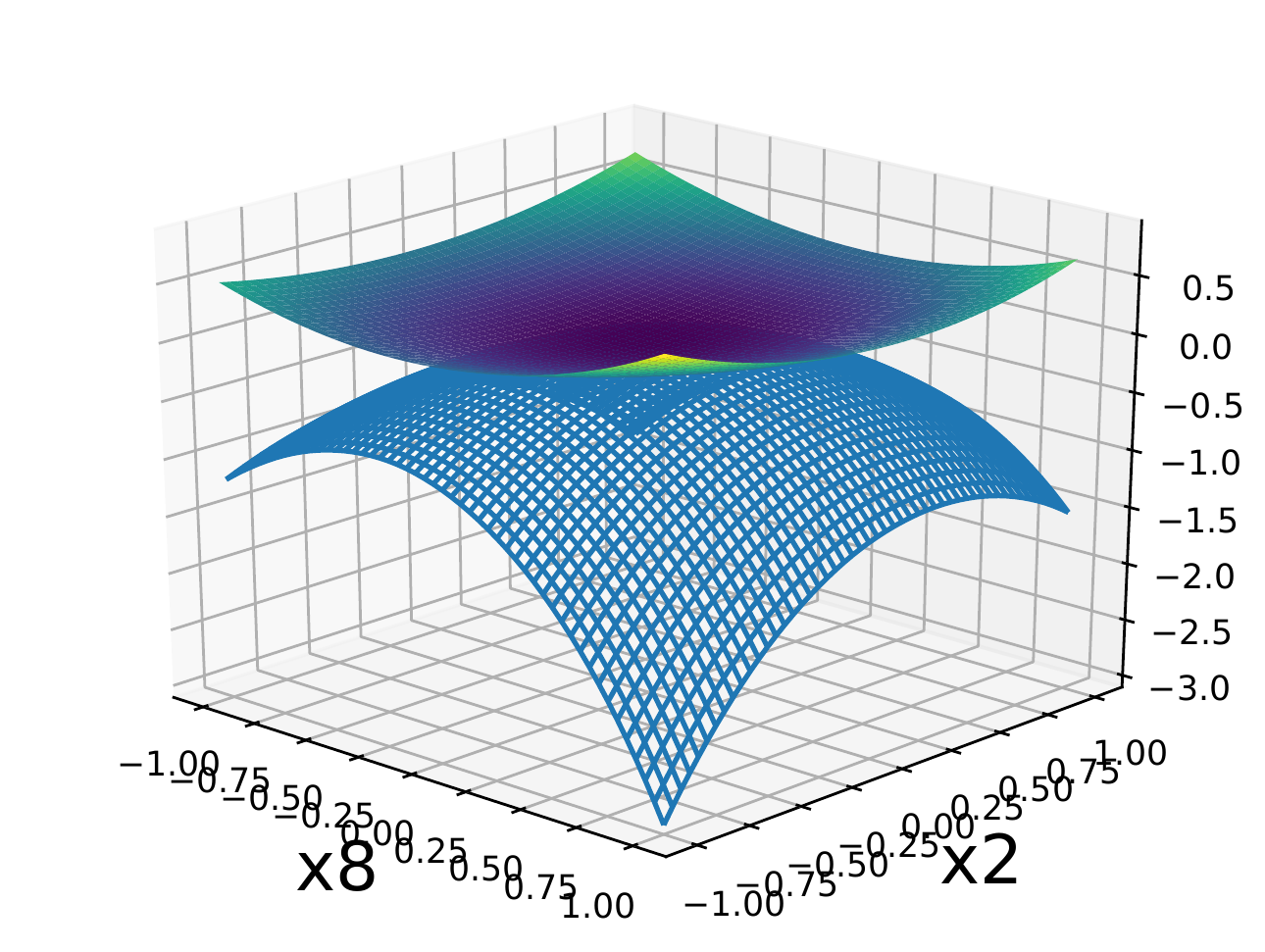}
    \includegraphics[width=0.2\textwidth]{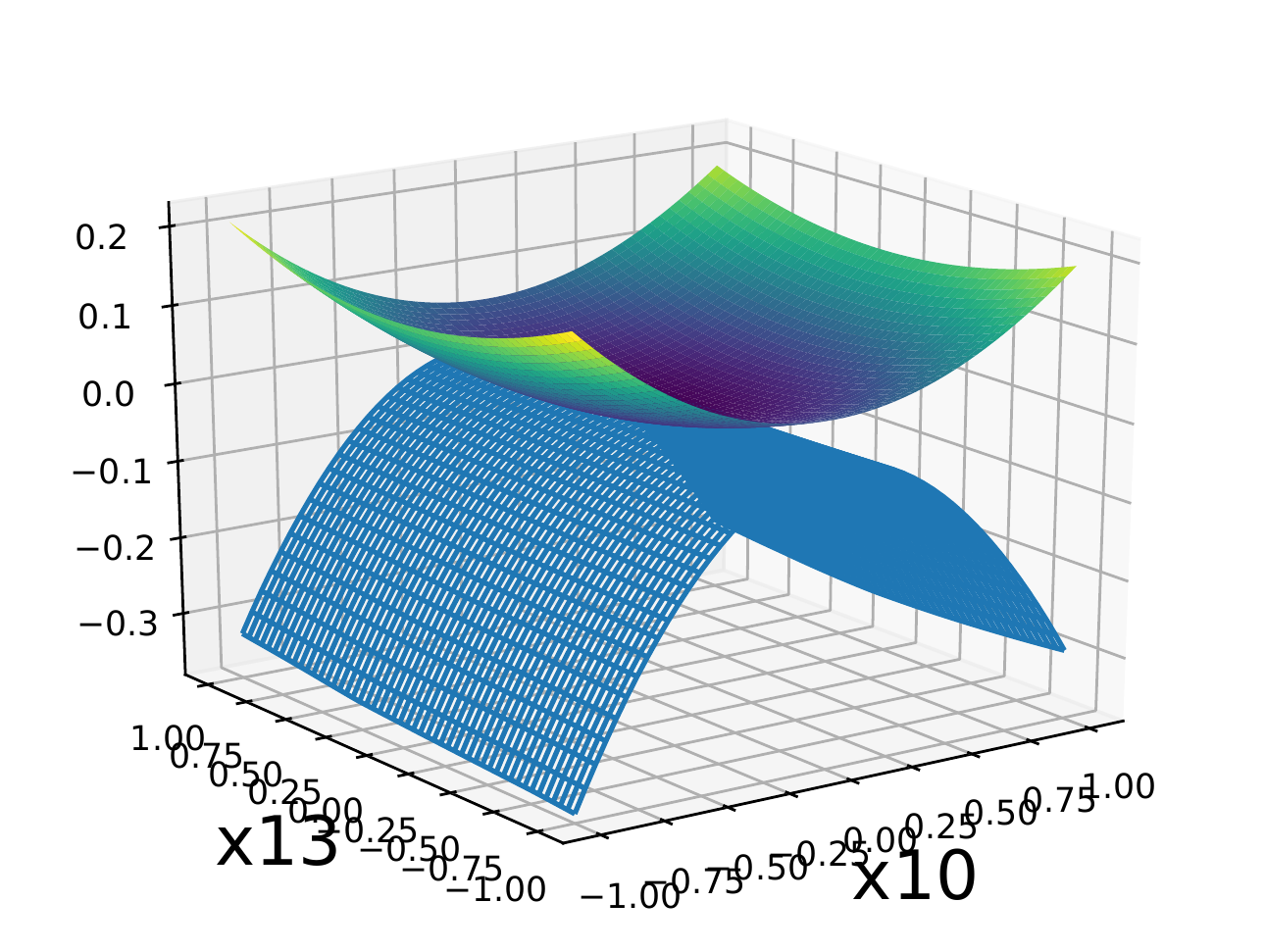}
  \caption{Plot of the learned Lyapunov-Net $V_{\theta}$ (top solid) and $D V_{\theta} \cdot f$ (bottom wire) in the $(x_2,x_8)$-plane (left) and $(x_{10},x_{13})$-plane (right) for the 30d synthetic DS.}
  \label{fig:10d_x2_x8_relu}
  \vspace{-12pt}
\end{figure}

\vspace{-3pt}

\subsection{Comparison with existing DL methods}
\label{sec:comparison}

To demonstrate the significant improvement of Lyapunov-Net over existing DL methods in approximation efficiency, we compare the proposed Lyapunov-Net to two recent approaches \cite{chang2020neural,grune2020computing} that also use deep neural networks to approximate Lyapunov functions in continuous DS setting. Specifically, we use the 30d synthetic DS described above as the test problem in this comparison. 
%\ye{Move the inverted pendulum description to its own part below. Here we only consider efficiency comparison with Chang and Grune.}
%\paragraph{Comparing Neural Network Methods} 
Both \cite{chang2020neural,grune2020computing} employ generic deep network structure of $V_{\theta}$, and thus require additional terms in their risk functions to enforce positive definiteness of the networks. 
%
%\ye{If possible, I suggest to give descriptive names to Chang's and Grune's methods. It seems that Chang called theirs ``Neural Lyapunov'' and Grune called his ``Deep Lyapunov''. Perhaps we can use these names in the legend of Fig.2 (with ours called Lyapunov-Net, if they take too much space then just call them NL, DL, LN, and explain in text or caption), and $V_{\theta}^{NL}$ and $V_{\theta}^{DL}$ (or just $\hat{V}_{\theta}$ and $\tilde{V}_{\theta}$ to save space) in (9) and (10) to distinguish theirs from our $V_{\theta}$.}
%
Specifically, the following risk function is used from \cite{grune2020computing}:
\begin{align}
\label{eq:grune_loss}
    \hat{\ell}_1(\theta)=& \textstyle{\frac{1}{N}\sum\limits_{i=1}^{N}}\big(\left(DV^{DL}_{\theta}(x_i) \cdot f(x_i)+\|x_i\|^{2}\right)_{+}^{2}\\
    &+\left(20\|x_i\|^2 - V^{DL}_{\theta}(x_i)\right)_{+}^{2} +\left(V^{DL}_{\theta}(x_i)-0.2\|x_i\|^2\right)_{+}^{2}\big), \nonumber
\end{align}
which aims at an approximate Lyapunov function $V^{DL}_{\theta}$ satisfying $0.2\|x\|^2 \le V^{DL}_{\theta}(x) \le 20\|x\|^2$ and $DV^{DL}_{\theta}(x) \cdot f(x) \le -\|x\|^2$ for all $x$.
In \cite{chang2020neural}, the following risk function is used:
\begin{equation}
    \textstyle{\hat{\ell}_2(\theta)=V^{NL}_{\theta}(0)^2+\frac{1}{N}\sum\limits_{i=1}^{N}[(D V^{NL}_{\theta}(x_i) \cdot f(x_i))_+
    +(-V^{NL}_{\theta}(x_i))_+]},
\label{eq:chang_loss}
\end{equation}
which aims at an approximate Lyapunov function $V^{NL}_{\theta}$ such that $V^{NL}_{\theta}(0)=0$, $V^{NL}_{\theta}(x) \ge 0$ and $DV^{NL}_{\theta}(x) \cdot f(x) \le 0$ for all $x$. 
The activation functions are set to softmax in \eqref{eq:grune_loss} as suggested in \cite{grune2020computing} and tanh in \eqref{eq:chang_loss} as suggested in \cite{chang2020neural}. We shall label these models as Deep Lyapunov (DL) and Neural Lyapunov (NL) respectively.
%\ye{As suggested in their papers?} \gaby{yes}.
%
%Where $\psi_{\theta}$ is a generic feed forward network using tanh activation, and $\phi_{\theta}$ is the same but using softmax activation, loss , and loss . 
%
For Lyapunov-Net we use \eqref{eq:empirical-Lyapunov-risk} as it already satisfies the positive definiteness condition. 
We again do not impose any structural information of the problem into our training, and thus all test methods recognize the DS as a generic 30d system for sake of a fair comparison.

%We train three networks using the risk functions of Lyapunov-Net along with \eqref{eq:grune_loss} and \eqref{eq:chang_loss}. 
%
We use the value of the less stringent empirical risk function $\hat{\ell}_2$ with $N=400,000$ defined in \eqref{eq:chang_loss} as a metric to evaluate all three methods.
Specifically, we plot the values of $\hat{\ell}_2$ (in log scale) versus iteration number and wall-clock training time (in seconds) in Figure \ref{fig:30d_comparison} using the same learning rate for all methods.

\begin{figure}[thpb]
  \centering
  %\framebox{\parbox{3in}{We suggest that you use a text box to insert a graphic (which is ideally a 300 dpi TIFF or EPS file, with all fonts embedded) because, in an document, this method is somewhat more stable than directly inserting a picture.}}
  %\includegraphics[width=.19\textwidth]{figs/Legend.pdf}
  \includegraphics[width=0.2\textwidth]{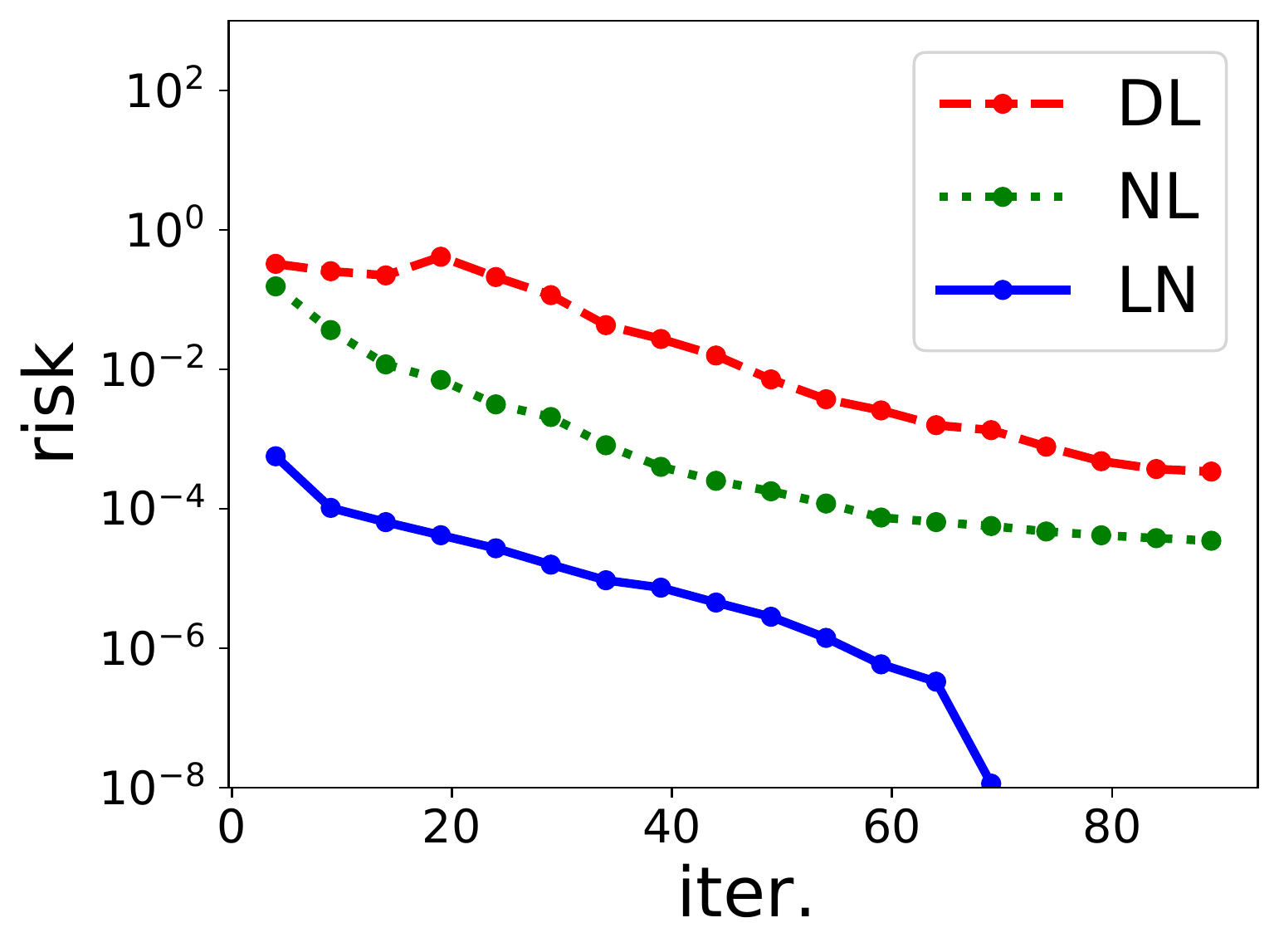}
   \includegraphics[width=0.2\textwidth]{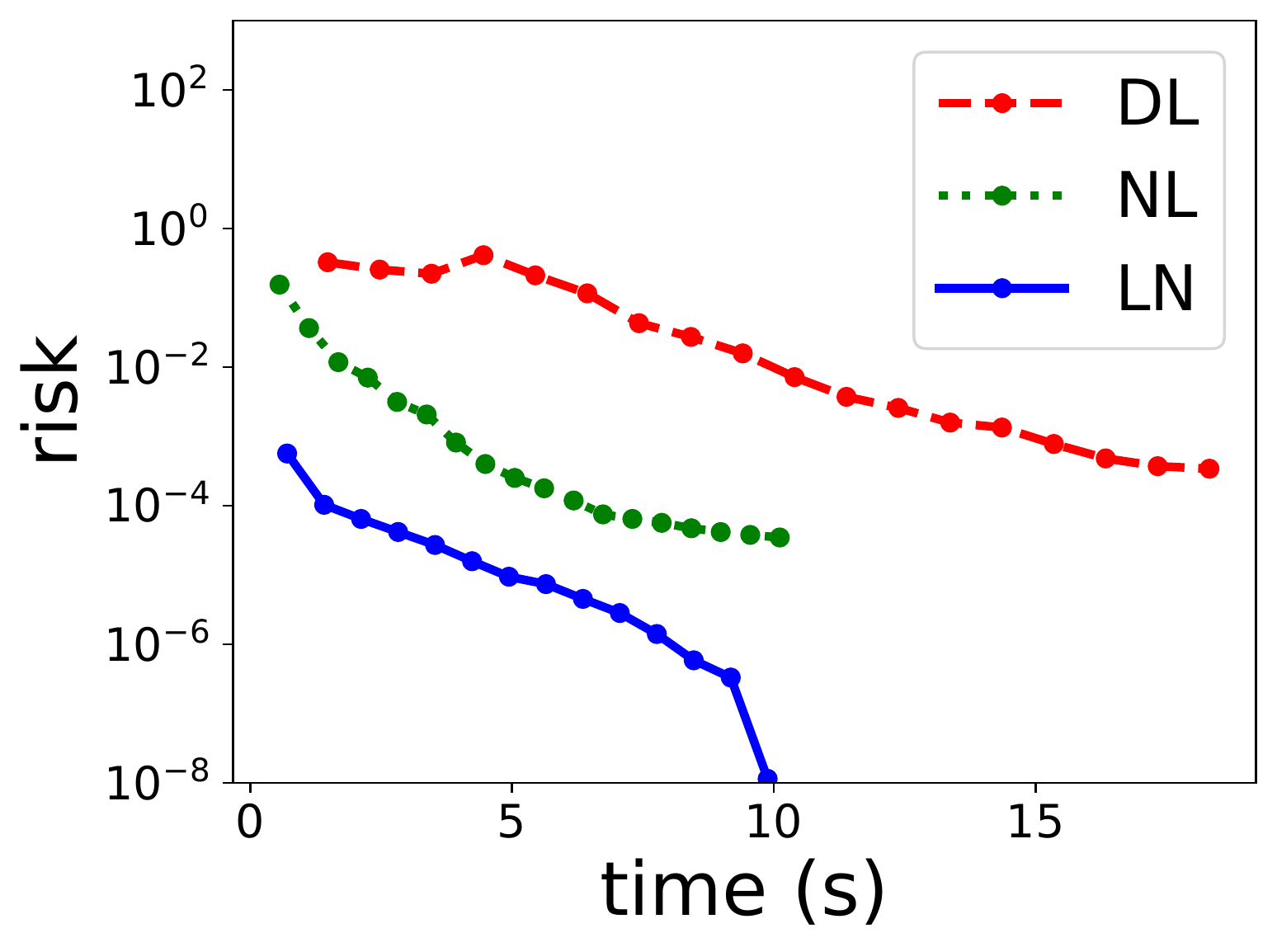}
  \caption{The value of empirical risk function $\hat{\ell}_2$ (in log scale) defined in \eqref{eq:chang_loss} versus iteration number (left) and wall-clock training time in seconds (right) for the three network and training settings: DL \cite{grune2020computing} (red dashed), NL \cite{chang2020neural} (green dotted), and Lyapunov-Net denoted LN (blue solid). Note that the risk of Lyapunov-Net often hits 0, the most desired risk function value, explaining the dropoff.} %\ye{Possible to move the legend away from curves? Perhaps increase or decrease vertical axis range? } }
  \label{fig:30d_comparison}
  \vspace{-12pt}
\end{figure}

%
%Each of the trained networks is then plotted on a log scale with value taken from the risk function defined by \eqref{eq:chang_loss}, with $N=600,000$. We compare to epochs as well as training time.
%
%We note all of the risk functions used have minimum of 0 which corresponds to a Lyapunov function for each. 
%
%\ye{should mention you used the same step size for all.}
We see in Figure \ref{fig:30d_comparison} that Lyapunov-Net (LN) has risk value decaying much faster than the other methods. Further, Lyapunov-Net training does not need hyper-parameter tuning to achieve this speed, whereas DL and NL require careful and tedious tuning to balance the different terms in the risk function in order to achieve satisfactory results as shown in Figure \ref{fig:30d_comparison}. This highlights the efficiency and simplicity of Lyapunov-Net to find the desired Lyapunov functions. 

We note that the performance of all methods can be further improved using additional structural information as discussed in  \cite{grune2020computing} and falsification techniques in training as in \cite{chang2020neural}. We leave these improvements to future investigations.

%\paragraph{Inverted Pendulum} 
%\ye{Create a new subsection and call it ``Application in Control''. }
\subsection{Application in Control}
\label{sec:control_comparison}
In this test we compare Lyapunov-Net and SOS/SDP methods in the problem of estimating the Region of Attraction (ROA) for the classical inverted pendulum control problem. We shall show how the Lyapunov-Net framework allows simultaneous training of the control policy and outperforms SOS/SDP methods by producing a larger ROA. 

The inverted pendulum is an often considered problem in control theory see \cite{chang2020neural, richards2018lyapunov}. For this problem we have dynamics $x=(\theta, \Dot{\theta})$ governed by $
    m l^{2} \ddot{\theta}=m g l\sin \theta-\beta \dot{\theta}+u$,
where $u=u(x)$ is the control. We use $g=9.82$, $l = 0.5$, $m=0.15$, and $\beta = 0.1$ for our experiments. We shall use this model to compare SOS/SDP type methods to Lyapunov-Net when it comes to estimating the ROA of this problem using the Lyapunov candidate function they both find within respective valid regions.
 
We adjust our model slightly so that we learn a $u$ policy alongside $V_{\theta}$. This is done by implementing a single layer neural network $\bar{u}$ which performs a simple linear transform for some learned matrix $A$. We use such a simple control so that the control can be used in the SOS/SDP algorithm. In the absence of such a comparison we could make $\bar{u}$ a much more complex and further improved control policy. 
%Which we use to allow for comparison to SOS/SDP. We note a potential benefit of all neural network methods is we can allow for more complex control policies than linear or polynomial ones. 
%
%\ye{Why using linear network $Au$? To align with SOS?} 
%\gaby{That is correct} 
%
The training method is the same for both networks. 
 
 \vspace{-10pt}
 
\begin{figure}[thpb]
  \centering
  %\framebox{\parbox{3in}{We suggest that you use a text box to insert a graphic (which is ideally a 300 dpi TIFF or EPS file, with all fonts embedded) because, in an document, this method is somewhat more stable than directly inserting a picture.}}
  \includegraphics[width=0.25\textwidth]{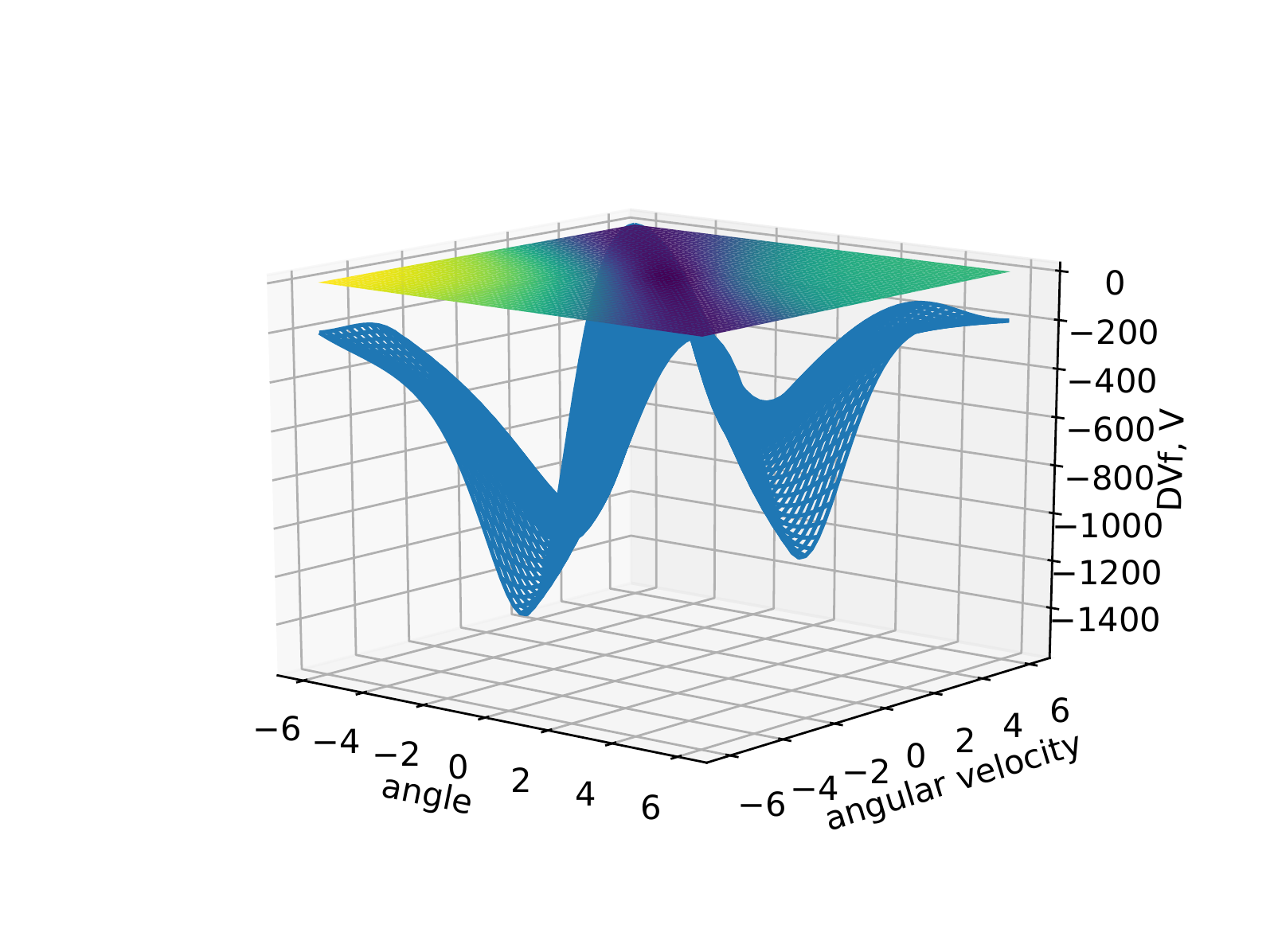}
  \includegraphics[width=0.17\textwidth]{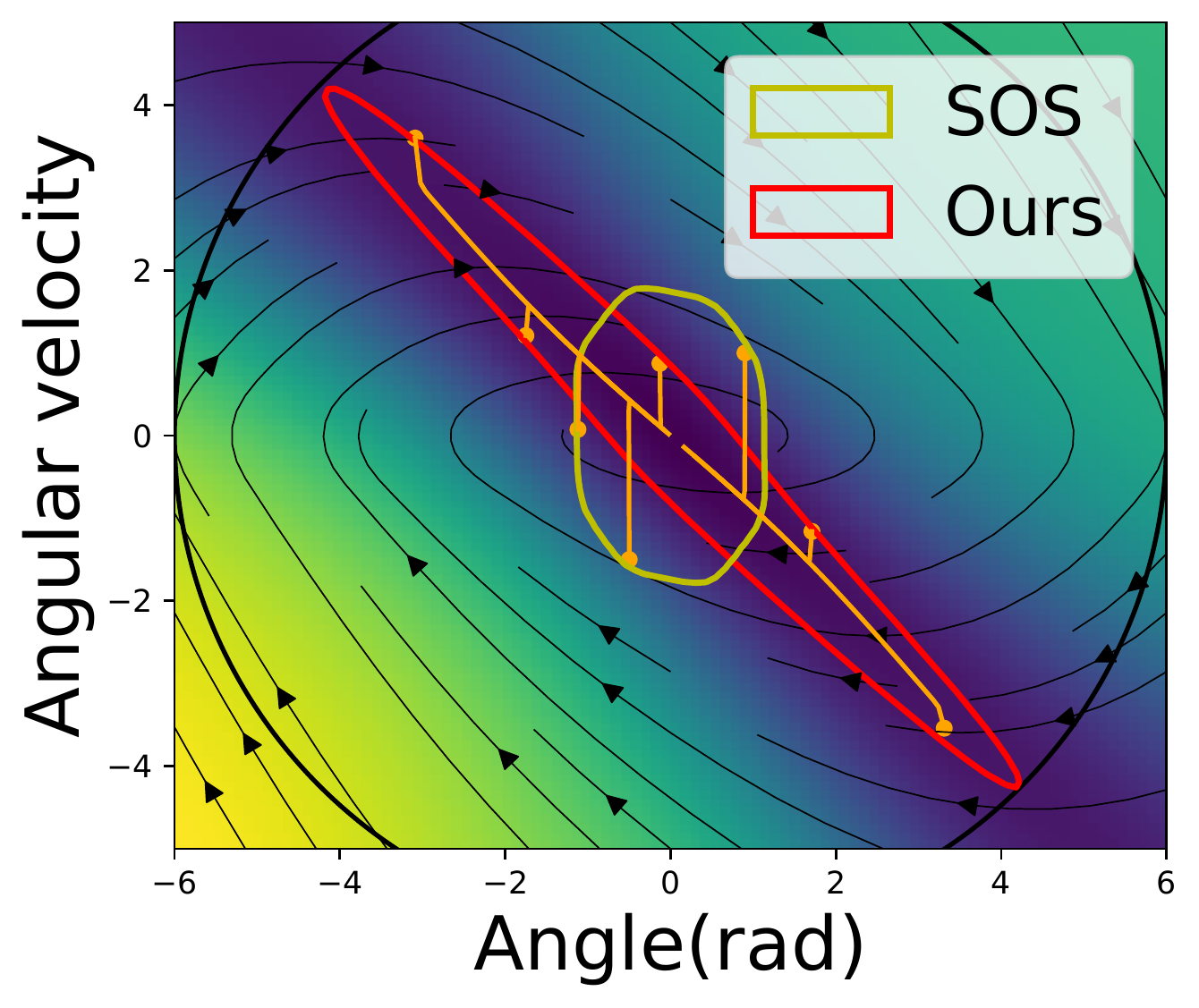}
  \caption{Left: learned Lyapunov-Net $V_{\theta}$ (top solid) and $D V_{\theta} \cdot f$ (bottom wire) for the 2d inverted pendulum in phase and angular velocity space. Right: Comparison of ROA for standard SOS/SDP methods and for Lyapunov-Net with orange sample trajectories.}
  \label{fig:inverted_3d_plot}
\end{figure}

\vspace{-9pt}

In Figure \ref{fig:inverted_3d_plot}, we use $\tanh$ as activation in $V_{\theta}$ because while the error bounds are similar with RePU, $\tanh$ has slightly better performance in this case. Once trained the $u$-policy found is used for the SOS type method. As SOS methods are only applicable to polynomial systems, we apply a Taylor approximation to the dynamics of the system and then compute a sixth order polynomial candidate using the standard SOS/SDP approach. We finally compute the ROA determined by both algorithms using the level-sets of the functions over the valid regions they found. We find that the area of the region found by Lyapunov-Net is larger than that found by SOS. This result is plotted in Figure \ref{fig:inverted_3d_plot} where the orange paths are a few sample trajectories. We note the similarity of our results to \cite{chang2020neural}, who in addition to the above, found examples where SOS methods produced incorrect ROA regions on this same problem.
%which also demonstrated the improved performance of the Neural Network method over some more classical control approaches when it comes to estimating the ROA of non-polynomial dynamical systems. We also note in \cite{chang2020neural} that the authors found examples where SOS methods produced incorrect ROA regions on this problem.
%
%However, the convergence speed is much faster in experiments with $\tanh$ than with ReLU, and as the network size needed here is below our theoretical upper bound this adjustment poses little problem to our analysis of the network structure in general.

% \begin{figure}[thpb]
%   \centering
%   %\framebox{\parbox{3in}{We suggest that you use a text box to insert a graphic (which is ideally a 300 dpi TIFF or EPS file, with all fonts embedded) because, in an document, this method is somewhat more stable than directly inserting a picture.}}
%   \includegraphics[scale=0.55]{figs/comparison.pdf}
%   \caption{Each of the models loss error as a function of iterations. Notice at the same learning rate the loss function of \cite{grune2020computing} is much slower to converge here when not using their extra training implements.}
%   \label{fig:30d_comparison}
% \end{figure}

\section{CONCLUSIONS}
\label{sec:conclusion}

We constructed a versatile deep neural network architecture called Lyapunov-Net to approximate Lyapunov function for general high-dimensional dynamical systems. We provided theoretical justifications on approximation power and certificate guarantees of Lyapunov-Nets. Applications to control Lyapunov functions are also considered. We demonstrated the effectiveness of our method on several test problems. The Lyapunov-Net framework developed in the present work is expected to be applicable to a much broader range of control and stability problems.

\addtolength{\textheight}{-13cm}   % This command serves to balance the column lengths

\bibliographystyle{abbrv}
\bibliography{library}

\end{document}